%  article.tex (Version 3.3, released 19 January 2008)
%  Article to demonstrate format for SPIE Proceedings
%  Special instructions are included in this file after the
%  symbol %>>>>
%  Numerous commands are commented out, but included to show how
%  to effect various options, e.g., to print page numbers, etc.
%  This LaTeX source file is composed for LaTeX2e.

%  The following commands have been added in the SPIE class 
%  file (spie.cls) and will not be understood in other classes:
%  \supit{}, \authorinfo{}, \skiplinehalf, \keywords{}
%  The bibliography style file is called spiebib.bst, 
%  which replaces the standard style unstr.bst.  

\documentclass[]{spie}  %>>> use for US letter paper
%%\documentclass[a4paper]{spie}  %>>> use this instead for A4 paper
%%\documentclass[nocompress]{spie}  %>>> to avoid compression of citations
%% \addtolength{\voffset}{9mm}   %>>> moves text field down
%% \renewcommand{\baselinestretch}{1.65}   %>>> 1.65 for double spacing, 1.25 for 1.5 spacing 
%  The following command loads a graphics package to include images 
%  in the document. It may be necessary to specify a DVI driver option,
%  e.g., [dvips], but that may be inappropriate for some LaTeX 
%  installations. 
\usepackage[]{graphicx}
\usepackage{setspace, dsfont}
\usepackage{amssymb,amsmath,mathrsfs,stmaryrd,upgreek,mathtools}%, , mathrsfs, amsthm, stmaryrd, upgreek, mathtools}
\usepackage{pgfplots, caption, tikz, subfigure, graphicx}
\usepackage{url}
\usepackage{bbm}
\usepgflibrary{arrows} 
\usepackage{varwidth}
\usetikzlibrary{spy}
\usepackage{graphicx,wrapfig,tikz}
\usetikzlibrary{calc,hobby} % To draw the smooth curve
\title{Topology Reduction in \\ Deep Convolutional Feature Extraction Networks}

\newcommand{\N}{\mathbb{N}}

\newcommand{\R}{\mathbb{R}}

\newcommand{\la}{\lambda}

\newcommand\mydots{\ifmmode\ldots\else\makebox[1em][c]{.\hfil.\hfil.}\fi}

\newcommand{\Z}{\mathbb{Z}}

\DeclareMathOperator{\supp}{supp}

%>>>> The author is responsible for formatting the 
%  author list and their institutions.  Use  \skiplinehalf 
%  to separate author list from addresses and between each address.
%  The correspondence between each author and his/her address
%  can be indicated with a superscript in italics, 
%  which is easily obtained with \supit{}.

\author{Thomas Wiatowski\supit{a}, Philipp Grohs\supit{b}, and Helmut B\"olcskei\supit{a}
\skiplinehalf
\supit{a}Department of Information Technology and Electrical Engineering, ETH Zurich, Sternwartstrasse 7, 8092 Zurich, Switzerland \\
\supit{b}Faculty of  Mathematics, University of Vienna, Oskar Morgenstern Platz 1, \\1090 Vienna, Austria
}

%>>>> Further information about the authors, other than their 
%  institution and addresses, should be included as a footnote, 
%  which is facilitated by the \authorinfo{} command.

\authorinfo{Further author information: (Send correspondence to T.W.)\\T.W.: E-mail: withomas@nari.ee.ethz.ch, Telephone: +41 44 63 22804\\  P.G.: E-mail: philipp.grohs@univie.ac.at, Telephone: +43 1 4277 55741\\  H.B.: E-mail: boelcskei@nari.ee.ethz.ch, Telephone: +41 44 63 23433}
%%>>>> when using amstex, you need to use @@ instead of @

%%%%%%%%%%%%%%%%%%%%%%%%%%%%%%%%%%%%%%%%%%%%%%%%%%%%%%%%%%%%% 
%>>>> uncomment following for page numbers
% \pagestyle{plain}    
%>>>> uncomment following to start page numbering at 301 
%\setcounter{page}{301} 
 
  \begin{document} 
  \maketitle 

%%%%%%%%%%%%%%%%%%%%%%%%%%%%%%%%%%%%%%%%%%%%%%%%%%%%%%%%%%%%% 
\begin{abstract}
Deep convolutional neural networks (CNNs) used in practice employ potentially hundreds of layers and $10$,$000$s of nodes.  Such network sizes entail significant computational complexity due to the large number of convolutions that need to be carried out; in addition, a large number of parameters needs to be learned and stored. Very deep and wide CNNs may therefore not be well suited to applications operating under severe resource constraints as is the case, e.g., in low-power embedded and mobile platforms. This paper aims at understanding the impact of CNN topology, specifically depth and width, on the network's feature extraction capabilities. We address this question for the class of scattering networks that employ either Weyl-Heisenberg filters or wavelets, the modulus non-linearity, and no pooling. The exponential feature map energy decay results in Wiatowski et al., 2017, are generalized to $\mathcal{O}(a^{-N})$,  where an \emph{arbitrary} decay factor $a>1$ can be realized through  suitable choice of the Weyl-Heisenberg prototype function or the mother wavelet. We then show how networks of fixed (possibly small) depth $N$ can be designed to guarantee that $((1-\varepsilon)\cdot 100)\%$ of the input signal's energy are contained in the feature vector. Based on the notion of operationally significant nodes, we characterize, partly rigorously and partly heuristically, the topology-reducing effects of (effectively) band-limited input signals, band-limited filters, and  feature map symmetries.  Finally, for networks based on Weyl-Heisenberg filters, we determine the  prototype function bandwidth that minimizes---for fixed network depth $N$---the average number of operationally significant nodes per layer.
\end{abstract}

%>>>> Include a list of keywords after the abstract 

\keywords{Machine learning, deep convolutional neural networks, scattering networks, feature extraction, wavelets, Weyl-Heisenberg frames}

%%%%%%%%%%%%%%%%%%%%%%%%%%%%%%%%%%%%%%%%%%%%%%%%%%%%%%%%%%%%%
\section{INTRODUCTION}
\label{sec:intro}  % \label{} allows reference to this section
Feature extraction based on deep convolutional neural networks (CNNs) has been applied with significant success in a wide range of practical machine learning tasks [\citenum{Goodfellow-et-al-2016}]. Many of these applications, such as, e.g.,  the classification of images in the ImageNet data set, employ very deep networks with potentially hundreds of layers and $10$,$000$s of nodes [\citenum{he2015deep,simonyan2014very}] (e.g., the CNN in [\citenum{he2015deep}] has a depth of $152$ with an average number of $472$ nodes per layer). Such network sizes entail formidable computational challenges, both in the training phase due to the large number of parameters to be learned (e.g., the CNN in [\citenum{simonyan2014very}] has $144$ million parameters), and in operating the network due  to the large number of convolutions that need to be carried out (e.g., the CNN in [\citenum{he2015deep}] entails $11.3$ billion FLOPS to pass a single image through the network). Moreover,  storing the learned network parameters requires large amounts of memory. Very deep and wide CNNs may therefore not be suited to applications operating under strong resource constraints as is the case, e.g., in low-power embedded and mobile platforms [\citenum{Lane}]. It is hence important to understand the impact of CNN topology, specifically depth and width, on the network's feature extraction capabilities. %, and on its  filters and non-linearities.

We address this question for the class of scattering networks as introduced in [\citenum{MallatS}] and extended in [\citenum{Wiatowski_journal}]. Scattering network-based feature extractors were shown to yield classification performance competitive with the state-of-the-art on various data sets [\citenum{wiatowski2016discrete,CInC,ICASSP2016}]. Moreover, a  mathematical theory exists, which allows to establish formally that   such  feature extractors are---under certain technical conditions---horizontally [\citenum{MallatS}] or vertically [\citenum{Wiatowski_journal}] translation-invariant, energy-conserving [\citenum{Waldspurger,czaja2016uniform,WiatowskiEnergy}], deformation-stable in the sense of [\citenum{MallatS}] or exhibit limited sensitivity to deformations on input signal classes such as band-limited functions [\citenum{Wiatowski_journal}], cartoon functions [\citenum{grohs_wiatowski}], and Lipschitz functions [\citenum{grohs_wiatowski}].

Estimates of the number $N$ of layers (i.e., the network depth) needed to have $((1-\varepsilon)\cdot 100)\%$ of the input signal energy be contained in the feature vector---obtained by aggregating filtered versions of the propagated signals (a.k.a. feature maps)---were recently obtained in [\citenum{WiatowskiEnergy}]. The results in [\citenum{WiatowskiEnergy}] apply to  scattering networks  employing the modulus non-linearity, no pooling, and general filters that are analytic, constitute Parseval  frames [\citenum{Antoine}], and are allowed to be different in different network layers. The main findings of [\citenum{WiatowskiEnergy}] state that the feature map energy decays at least as fast as i) $\mathcal{O}(N^{-\alpha})$, for an explicitly specified $\alpha>0$, for general filters, ii) $\mathcal{O}((3/2)^{-N})$  for broad families of Weyl-Heisenberg (WH) filters, and iii) $\mathcal{O}((5/3)^{-N})$ for broad families of wavelet filters. 
 %%%%%%%%%%%%%%%
%CONTRIBUTIONS
%%%%%%%%%%%%%%%

\textit{Contributions}. For scattering networks that employ the modulus non-linearity and no pooling, we generalize the exponential energy decay results in [\citenum{WiatowskiEnergy}] to $\mathcal{O}(a^{-N})$,  where an \emph{arbitrary} decay factor $a>1$ can be realized by suitable choice of the WH prototype function or the mother wavelet. We then show how networks of fixed (possibly small) depth $N$ can be designed to guarantee that $((1-\varepsilon)\cdot 100)\%$ of the input signal's energy are contained in the feature vector. Based on the notion of operationally significant nodes, we characterize, partly rigorously and partly heuristically, the topology-reducing effects of (effectively) band-limited input signals, band-limited filters, and  feature map symmetries. The results we obtain suggest a classification into shallow, single-layer, constant-width, expanding-width, depth-pruned, and extremely narrow scattering networks. Finally, for networks based on WH filters, we determine the prototype function bandwidth that minimizes---for a fixed network depth $N$---the average number of operationally significant nodes per layer.

\section{CNN-BASED FEATURE EXTRACTORS}\label{architecture}
For the general notation employed in this paper, we refer to \cite[Section 1]{WiatowskiEnergy}. We set the stage by briefly reviewing the basics of scattering network-based feature extractors. The presentation follows closely that in \cite[Section 2]{WiatowskiEnergy}. Throughout the paper, we focus on the $1$-D case and employ the module sequence
\begin{equation*}\label{mods}
\Omega:=\big((\Psi,|\cdot|,\text{Id})\big)_{n\in \mathbb{N}},
\end{equation*}
i.e., each network layer is associated with (i) the same collection of filters  $\Psi = \{\chi \}\cup\{ g_{\lambda}\}_{\lambda \in \Lambda}\subseteq L^1(\R) \cap L^2(\R)$, where $\chi$,  referred to as  output-generating filter, and the $g_{\lambda}$, indexed by a countable  set $\Lambda$,  satisfy the Parseval frame condition [\citenum{Antoine}]
\begin{equation}\label{PFP}
\| f\ast\chi\|_2^2 +  \sum_{\lambda \in \Lambda}\| f\ast g_{\lambda}\|^2 = \| f\|_2^2,\quad \forall f\in L^2(\R),
\end{equation} (ii) the modulus non-linearity $|\cdot|:L^2(\R)\to L^2(\R)$, $|f|(x):=|f(x)|$, and (iii) no pooling, which, in the terminology of [\citenum{Wiatowski_journal}], corresponds to pooling through the identity operator with pooling factor equal to one. Associated with the module $(\Psi,|\cdot|,\text{Id})$, the operator $U[\lambda]$ defined in \cite[Eq. 12]{Wiatowski_journal} particularizes to
 \begin{equation}\label{eq:1}
U[\lambda]f=\big|f\ast g_{\la} \big|.
\end{equation}   
We extend \eqref{eq:1} to paths on index sets $$q=(\lambda_1,\lambda_2,\dots, \lambda_n) \in \underbrace{\Lambda\times \Lambda\times\dots\times \Lambda}_{n \text{ times }}=:\Lambda^n, \quad n \in \mathbb{N},$$ according to $U[q]f=\,U[(\lambda_1,\lambda_2,\dots, \lambda_n)]f\nonumber:=\, U[\lambda_n] \,\cdots \,U[\lambda_2]U[\lambda_1]f$, where, for the empty path $e:=\emptyset$, we set $\Lambda^0:=\{ e \}$ and $U[e]f:=f$, for  $f\in L^2(\R)$. The signals $U[q]f$ are often referred to  as  feature maps in the deep learning literature. The feature vector $\Phi_\Omega(f)$ is obtained by aggregating filtered versions of the feature maps. More formally, $\Phi_\Omega(f)$ is defined as \cite[Def. 3]{Wiatowski_journal}
%\vspace{-0.3cm}
\begin{equation}\label{ST}
\Phi_\Omega (f):=\bigcup_{n=0}^\infty\Phi^n_\Omega(f),
\vspace{-0.1cm}
\end{equation}
where $\Phi^n_\Omega(f):=\{ (U[q]f) \ast \chi \}_{q \in \Lambda^n}$ are the features generated in the $n$-th network layer, see Figure \ref{fig:gsn}. Here, $n = 0$ corresponds to the root of the network. The feature extractor\footnote{\small{Throughout, we refer to $\Phi_\Omega$ as \emph{feature extractor} and to $\Phi_\Omega(f)$ as \emph{feature vector} corresponding to the input signal $f$.}} $\Phi_\Omega$ was shown in \cite[Theorem 1]{Wiatowski_journal} to be vertically translation-invariant, provided though that pooling is  employed, with pooling factors $S_n\geq1$, $n\in \mathbb{N}$, (see \cite[Eq. 6]{Wiatowski_journal} for the definition of the general pooling operator) such that $\lim\limits_{N\to \infty} \prod_{n=1}^N S_n=\infty$. Moreover, $\Phi_\Omega$ exhibits limited sensitivity to certain non-linear deformations on input signal classes such as band-limited functions \cite[Theorem 2]{Wiatowski_journal}, cartoon functions \cite[Theorem 1]{grohs_wiatowski}, and Lipschitz functions \cite[Corollary 1]{grohs_wiatowski}. More recently, it was shown in \cite[Theorem 1]{WiatowskiEnergy} that $\Phi_\Omega$ is energy-conserving in the sense of the energy contained in the feature vector $\Phi_\Omega(f)$  being proportional to that of the corresponding input signal $f$.
%%%%%%%%
%FIGURE
%%%%%%%%
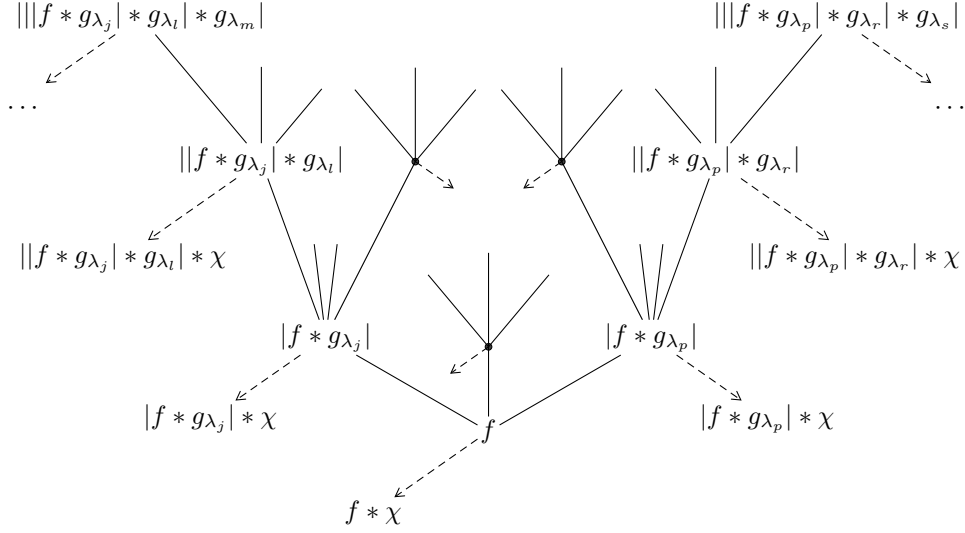
\begin{figure}[t!]
\centering
\begin{tikzpicture}[scale=2.5,level distance=10mm,>=angle 60]

  \tikzstyle{every node}=[rectangle, inner sep=1pt]
  \tikzstyle{level 1}=[sibling distance=30mm]
  \tikzstyle{level 2}=[sibling distance=10mm]
  \tikzstyle{level 3}=[sibling distance=4mm]
  \node {$f$}%[fill=yellow, grow'=up]
	child[grow=90, level distance=.45cm] {[fill=gray!50!black] circle (0.5pt)
		child[grow=130,level distance=0.5cm] %{[fill=gray!50!black] circle (0.5pt)}
        		child[grow=90,level distance=0.5cm] %{[fill=gray!50!black] circle (0.5pt)}
        		child[grow=50,level distance=0.5cm]% {[fill=gray!50!black] circle (0.5pt)}
		child[level distance=.25cm,grow=215, densely dashed, ->] {}  
	}
        child[grow=150] {node {$|f\ast g_{\lambda_j}|$}
	child[level distance=.75cm,grow=215, densely dashed, ->] {node {$|f\ast g_{\lambda_j}|\ast\chi$}
	}
	child[grow=83, level distance=0.5cm] %{[fill=gray!50!black] circle (0.5pt)}
	child[grow=97, level distance=0.5cm] %{[fill=gray!50!black] circle (0.5pt)}
        child[grow=110] {node {$||f\ast g_{\lambda_j}|\ast g_{\lambda_l}|$}
	child[level distance=.9cm,grow=215, densely dashed, ->] {node {$||f\ast g_{\lambda_j}|\ast g_{\lambda_l}|\ast\chi$}
	}
        child[grow=130] {node {$|||f\ast g_{\lambda_j}|\ast g_{\lambda_l}|\ast g_{\lambda_m}|$}
	child[level distance=0.75cm,grow=215, densely dashed, ->] {node[align=left]{\\ $\cdots$}}}
        child[grow=90,level distance=0.5cm] %{[fill=gray!50!black] circle (0.5pt)}
 	child[grow=50,level distance=0.5cm]%{[fill=gray!50!black] circle (0.5pt)}
       }
       child[grow=63, level distance=1.05cm] {[fill=gray!50!black] circle (0.5pt)
	child[grow=130,level distance=0.5cm] %{[fill=gray!50!black] circle (0.5pt)}
       child[grow=90,level distance=0.5cm] %{[fill=gray!50!black] circle (0.5pt)}
       child[grow=50,level distance=0.5cm] %{[fill=gray!50!black] circle (0.5pt)}    
       child[level distance=.25cm,grow=325, densely dashed, ->] {}     
       }
       }
       child[grow=30] {node {$|f\ast g_{\lambda_p}|$}
       child[level distance=0.75cm, grow=325, densely dashed, ->] {node {$|f\ast g_{\lambda_p}|\ast\chi$}
	}
	child[grow=83, level distance=0.5cm] %{[fill=gray!50!black] circle (0.5pt)}
	child[grow=97, level distance=0.5cm] %{[fill=gray!50!black] circle (0.5pt)}
        child[grow=117, level distance=1.05cm] {[fill=gray!50!black] circle (0.5pt)
        %child[grow=63] {node[fill=blue!10] {}%%%
        child[grow=130,level distance=0.5cm] %{[fill=gray!50!black] circle (0.5pt)}
        child[grow=90,level distance=0.5cm] %{[fill=gray!50!black] circle (0.5pt)}
        child[grow=50,level distance=0.5cm] %{[fill=gray!50!black] circle (0.5pt)}
        child[level distance=.25cm,grow=215, densely dashed, ->] {}  
	 }
        child[grow=70] {node {$||f\ast g_{\lambda_p}|\ast g_{\lambda_r}|$}
	 child[level distance=0.9cm,grow=325, densely dashed, ->] {node {$||f\ast g_{\lambda_p}|\ast g_{\lambda_r}|\ast\chi$}}
	child[grow=130,level distance=0.5cm] %{[fill=gray!50!black] circle (0.5pt)}
         child[grow=90,level distance=0.5cm] %{[fill=gray!50!black] circle (0.5pt)}
             child[grow=50] {node {$|||f\ast g_{\lambda_p}|\ast g_{\lambda_r}|\ast g_{\lambda_s}|$}
             child[level distance=0.75cm,grow=325, densely dashed, ->] {node[align=left]{\\ $\cdots$}}}
	}
     }
	child[level distance=0.75cm, grow=215, densely dashed, ->] {node {$f\ast \chi$}};
\end{tikzpicture}
\vspace{0.4cm}
\caption{\small{Network architecture underlying the feature extractor  \eqref{ST}.}} %The index $\lambda_k$ corresponds to the $k$-th filter $g_{\lambda_k}$ of the collection $\Psi$. The function $\chi$ is the output-generating filter employed in every network layer. The root of the network corresponds to $n=0$.}} 
\label{fig:gsn}
\end{figure}

\section{FEATURE MAP ENERGY DECAY}\label{Sec:prop}
The total energy contained in the feature maps in the $n$-th network layer is given by 
 \begin{equation*}\label{ene}
 W_n(f):=\sum_{q\in \Lambda^n}\| U[q]f\|_2^2, \quad f\in L^2(\R).
 \end{equation*}
 Our goal is to construct WH and wavelet filters that realize exponential energy decay according to $W_n(f)=\mathcal{O}(a^{-n})$, with arbitrary $a>1$. In particular, we want to tune the decay factor $a$ by adjusting a single parameter, which will be seen to determine the WH prototype function or the mother wavelet  bandwidth. This will be accomplished through the following constructions employed throughout the paper:
 \begin{itemize}
\item[i)]{\emph{WH filters}: For fixed $R>0$, $\delta \geq \frac{R}{2}$, let the functions $g,\phi \in L^1(\R)\cap L^2(\R)$ satisfy the Littlewood-Paley condition
\begin{equation*}\label{eq:Gaborcondi}
 |\widehat{\phi}(\omega)|^2+\sum_{k=1}^{\infty} |\widehat{g}(\omega-(Rk+\delta))|^2=1,\hspace{0.5cm} a.e. \ \omega\geq 0,
 \end{equation*}
 with $\text{supp}(\widehat{g})= [-R,R]$, $\widehat{g}(-\omega)=\widehat{g}(\omega)$, and $\widehat{g}$ real-valued.  
Moreover, let $g_k(x):=e^{2\pi i (Rk+\delta)x}g(x)$, $k\geq1$,  $g_k(x):=e^{-2\pi i (R|k|+\delta)x}g(x)$, $k\leq-1$, and set $\chi(x):=\phi(x)$, $x \in \R$. The Fourier transforms $\widehat{g_k}$ and $\widehat{g}$ are illustrated in Figure \ref{fig123}. }
\item[ii)]{\emph{Wavelets}: For fixed $r>1$, let the mother and father wavelets $\psi,\phi \in L^1(\R)\cap L^2(\R)$ satisfy the Littlewood-Paley condition [\citenum{Paley}]
\begin{equation*}\label{eq:Waveletcondi}
 |\widehat{\phi}(\omega)|^2+\sum_{j=1}^{\infty} |\widehat{\psi}(r^{-j}\omega)|^2=1,\hspace{0.5cm} a.e. \ \omega\geq 0,
 \end{equation*}
with  $\text{supp}(\widehat{\psi})= [r^{-1},r]$ and $\widehat{\psi}$ real-valued. Moreover, let $g_j(x):=r^{j}\psi(r^jx)$, $j\geq1$,  $g_j(x):=r^{|j|}\psi(-r^{|j|}x)$, $j\leq-1$, and let the output-generating filter be  $\chi(x):=\phi(x)$, $x \in \R$. The Fourier transforms of the wavelets $g_j$ and  the mother wavelet $\psi$ are illustrated in Figure \ref{fig8}.}
\end{itemize}

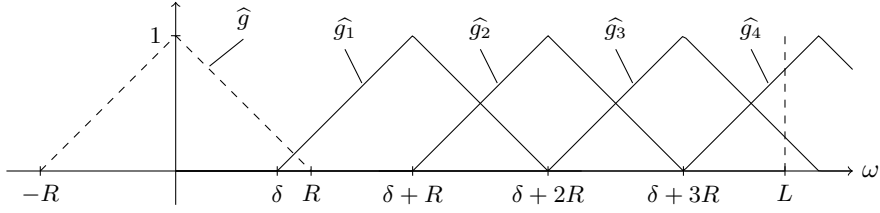
\begin{figure}
\begin{center}

\begin{tikzpicture}
	\begin{scope}[scale=.9]
	%Koordinatenachsen
	\draw[->] (-2.5,0) -- (10,0) node[right] {$\omega$};
	\draw[->] (0,-.5) -- (0,2.5) node[above] {};%$|\omega-2^j\eta|$, $|\omega-2^{j-1}\eta|$};
%	\draw[dashed] (0,1cm) -- (5.5,1cm);
%	\draw[dashed] (1cm,0) -- (1cm,3.1cm);
%	\draw[dotted] (0,1.666cm) -- (2,1.666cm);
%	\draw[dotted] (2cm,0) -- (2cm,1.666cm);
%	
		\draw (1.5 cm,2pt) -- (1.5 cm,-2pt) node[anchor=north] {\footnotesize$\delta$};
		\draw (-2 cm,2pt) -- (-2 cm,-2pt) node[anchor=north] {\footnotesize$-R$};
		\draw (2 cm,2pt) -- (2 cm,-2pt) node[anchor=north] {\footnotesize$R$};

		\draw (3.5 cm,2pt) -- (3.5 cm,-2pt) node[anchor=north] {\footnotesize$\delta+R$};
		\draw (5.5 cm,2pt) -- (5.5 cm,-2pt) node[anchor=north] {\footnotesize$\delta+2R$};
		\draw (7.5 cm,2pt) -- (7.5 cm,-2pt) node[anchor=north] {\footnotesize$\delta+3R$};
		\draw (9 cm,2pt) -- (9 cm,-2pt) node[anchor=north] {\footnotesize$L$};

		\draw (2pt, 2 cm) -- (-2pt,2 cm) node[anchor=east] {\footnotesize$1$};
		%\draw (2pt, 1.666 cm) -- (-2pt,1.666 cm) node[anchor=east] {\small$\frac{5}{3}$};
			
			%\draw (1.6 cm,1.1cm) -- (1.5 cm,1.4cm) node[anchor=south] {\small$h_3$};
			\draw (2.75 cm,1.3cm) -- (2.5 cm,1.8cm) node[anchor=south] {\small$\widehat{g_1}$};
			\draw (4.75 cm,1.3cm) -- (4.5 cm,1.8cm) node[anchor=south] {\small$\widehat{g_2}$};
			\draw (6.75 cm,1.3cm) -- (6.5 cm,1.8cm) node[anchor=south] {\small$\widehat{g_3}$};
			\draw (8.75 cm,1.3cm) -- (8.5 cm,1.8cm) node[anchor=south] {\small$\widehat{g_4}$};
			\draw (.5 cm,1.55cm) -- (1cm,1.95cm) node[anchor=south] {\small$\widehat{g}$};
			%\draw (4.75 cm,.55cm)--(4.65 cm,.85cm)  node[anchor=south] {\small$h_1$};

        		\draw[dashed] (9,0) -- (9,2);
               % \draw[scale=1,domain=-1:1,dashed,variable=\x,samples=100] plot ({\x},{2*exp((-(2.55*(\x))*(2.55*(\x))))});
                 \draw[scale=1,domain=-2:2,dashed,variable=\x,samples=200] plot ({\x},{2*max(0,1-abs(\x/2))});
                  \draw[scale=1,domain=0:6,variable=\x,samples=200] plot ({\x},{2*max(0,1-abs((\x-3.5)/2))});
                  \draw[scale=1,domain=3.:9,variable=\x,samples=200] plot ({\x},{2*max(0,1-abs((\x-5.5)/2))});
                  \draw[scale=1,domain=0:10,variable=\x,samples=200] plot ({\x},{2*max(0,1-abs((\x-7.5)/2))});
                  \draw[scale=1,domain=0:10,variable=\x,samples=200] plot ({\x},{2*max(0,1-abs((\x-9.5)/2))});

	\end{scope}
\end{tikzpicture}

\end{center}
\caption{\small{Illustration of the Fourier transforms of the WH filters $g_k$ on the frequency band $[0,L]$. The Fourier transform $\widehat{g}$  of the prototype function  $g$  is supported on the interval $[-R,R]$. }}
\label{fig123}
\end{figure}
The conditions we impose can be satisfied by construc\-ting $g$, $\phi$ in i) from a function whose Fourier transform is a $1$-D $B$-spline \cite[Section 1]{Gaborexmpl}, and  $\psi,\phi$ in ii)  from, e.g., the analytic Meyer wavelet \cite[Section 3.3.5]{Daubechies}. We emphasize that both the WH and the wavelet filters satisfy---by construction---the analyticity and highpass condition \cite[Assumption 1]{WiatowskiEnergy} as well as the symmetry property
\begin{equation}\label{sdlsdhkdjfshf}
\widehat{g_{\lambda}}(-\omega)=\widehat{g_{-\lambda}}(\omega), \quad \forall \,\lambda\in \mathbb{Z}\backslash\{ 0\}, \ \forall \, \omega\in \R,
\end{equation}
 which will turn out (in Section \ref{lkfnkjfjkewh}) to be key in reducing the number of ``operationally relevant nodes'', a notion defined in \eqref{sdfdddddas4444} below.  We refer to  the intervals $[-\delta,\delta]$ and $[-1,1]$ as ``spectral gaps'' left by the WH and wavelet filters,  respectively,  as we have $\supp{(\widehat{g_k}\,)}\cap[-\delta,\delta]=\emptyset$, for all $k\in \Z\backslash\{0\}$, in the WH case, and $\supp{(\widehat{g_j}\,)}\cap[-1,1]=\emptyset$, for all $j\in \Z\backslash\{0\}$, in the wavelet case.
 
The results in this section and in Section \ref{fslksflksdfjklj} apply to input signals that belong to the class of Sobolev functions $H^s(\R)=\big\{f\in L^2(\R) \ \big| \  \int_{\R}(1+|\omega|^2)^{s}|\widehat{f}(\omega)|^2\mathrm d\omega<\infty \big\}$, $s\geq 0$, where the parameter $s$ acts as a smoothness index.  Sobolev functions encompass a wide range of practically relevant signal classes such as square-integrable functions $L^2(\R)=H^{0}(\R)$,  strictly $(L)$-band-limited functions $L_L^2(\R)\subseteq H^s(\R)$, for all $L>0$ and all $s\geq 0$, and cartoon functions  [\citenum{Cartoon}] $\mathcal{C}^{K}_{\mathrm{CART}}\subseteq H^s(\R)$, for all $K>0$ and all $s\in (0,\frac{1}{2})$ (see \cite[Lemma 1]{WiatowskiEnergy}). We note that cartoon functions  are widely used in the mathematical signal processing literature [\citenum{grohs_wiatowski,wiatowski2016discrete,WiatowskiEnergy,Grohs_alpha}] as a model for natural images such as, e.g., images of handwritten digits [\citenum{MNIST}].

Our first main result is the following.
%%%%%%%%%%%%%
%%%%%%%%%%%%
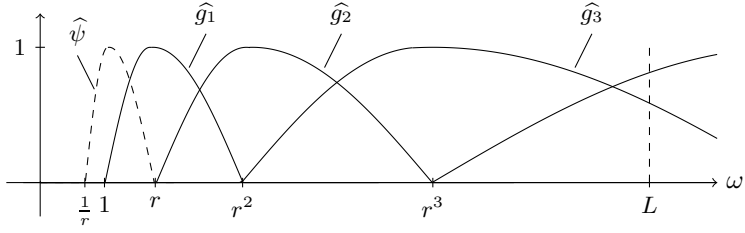
\begin{figure}
\begin{center}
\begin{tikzpicture}
	\begin{scope}[scale=.9]
	%Koordinatenachsen
	\draw[->] (-.25,0) -- (10,0) node[right] {$\omega$};
	\draw[->] (0,-.5) -- (0,2.5) node[above] {};
		     		\draw[dashed] (9,0) -- (9,2);
		\draw(-0.5,0) -- (10,0);
				\draw (9 cm,2pt) -- (9 cm,-2pt) node[anchor=north] {\footnotesize$L$};
		\draw (.66 cm,2pt) -- (.66 cm,-2pt) node[anchor=north] {\small$\frac{1}{r}$};
		\draw (.95 cm,2pt) -- (.95 cm,-2pt) node[anchor=north] {\small$1$};
		\draw (1.7 cm,2pt) -- (1.7 cm,-2pt) node[anchor=north] {\small$r$};
		\draw (2.99 cm,2pt) -- (2.99 cm,-2pt) node[anchor=north] {\small$r^2$};
		\draw (5.8 cm,2pt) -- (5.8 cm,-2pt) node[anchor=north] {\small$r^3$};
		\draw (2pt, 2 cm) -- (-2pt,2 cm) node[anchor=east] {\small$1$};
		%\draw (1.57 cm,2pt) -- (1.57 cm,-2pt) node[anchor=north] {\small$\pi$};

		\draw (2.2 cm,1.7cm) -- (2.45 cm,2.2cm) node[anchor=south] {\small$\widehat{g_1}$};
		\draw (4.2 cm,1.7cm) -- (4.45 cm,2.2cm) node[anchor=south] {\small$\widehat{g_2}$};
		\draw (7.9 cm,1.7cm) -- (8.15 cm,2.2cm) node[anchor=south] {\small$\widehat{g_3}$};
%		\draw (6. cm,1.3cm) -- (6.25 cm,1.8cm) node[anchor=south] {\small$\widehat{g_2}$};														\draw (8.25 cm,1.38cm) -- (8.cm,1.88cm) node[anchor=south] {\small$\widehat{g_3}$};
		\draw (.8 cm,1.38cm) -- (.55cm,1.88cm) node[anchor=south] {\small$\widehat{\psi}$};

               %  \draw[scale=1,domain=0.3325:.5,variable=\x,samples=100] plot ({\x},{2*sin(115*(1.57*(3*abs(\x)-1)))});
                %\draw[scale=1,domain=0.5:.85,variable=\x,samples=100] plot ({\x},{2*cos(115*(1.57*(1.5*abs(\x+0.15)-1)))});
                
                \draw[scale=1,domain=0:1,dashed,variable=\x,samples=100] plot ({\x},{max(2*sin(115*(1.57*(3*abs(\x/2)-1))),0)});
                \draw[scale=1,domain=1:2,dashed,variable=\x,samples=150] plot ({\x},{max(0,2*cos(115*(1.57*(1.5*abs((\x+0.0125)/2+0.15)-1))))});
                
                \draw[scale=1,domain=0:1.64,variable=\x,samples=100] plot ({\x},{max(2*sin(115*(1.57*(3*abs((\x+0.375)/4)-1))),0)});
                \draw[scale=1,domain=1.6395:5,variable=\x,samples=150] plot ({\x},{max(0,2*cos(115*(1.57*(1.5*abs((\x+0.025+0.375)/4+0.15)-1))))});
                
                \draw[scale=1,domain=0:3.1,variable=\x,samples=100] plot ({\x},{max(2*sin(115*(1.57*(3*abs((\x+2*0.475)/8)-1))),0)});
                \draw[scale=1,domain=3.09:7,variable=\x,samples=150] plot ({\x},{max(0,2*cos(115*(1.57*(1.5*abs((\x+0.05+2*0.475)/8+0.15)-1))))});
                
                \draw[scale=1,domain=0:5.5,variable=\x,samples=100] plot ({\x},{max(2*sin(115*(1.57*(3*abs((\x+4*0.59)/16)-1))),0)});
                \draw[scale=1,domain=5.5:10,variable=\x,samples=150] plot ({\x},{max(0,2*cos(115*(1.57*(1.5*abs((\x+0.1+4*0.59)/16+0.15)-1))))});
                
      		\draw[scale=1,domain=5.8:10,variable=\x,samples=100] plot ({\x},{max(2*sin(115*(1.57*(3*abs((\x+8*0.61)/32)-1))),0)});

               % \draw[scale=1,domain=-.3:2,dashed,variable=\x,samples=100] plot ({\x},{2*exp((-(3*(\x-1.25))*(3*(\x-1.25))))}); 
%	        \draw[scale=1,domain=-.3:9,smooth,variable=\x,samples=100] plot ({\x},{2*exp((-(1.5*(\x-2.5))*(1.5*(\x-2.5))))});
%  	 	  \draw[scale=1,domain=-.3:9,smooth,variable=\x,samples=100] plot ({\x},{2*exp((-.75*(\x-5)*.75*(\x-5)))});
%    	   \draw[scale=1,domain=-.3:9,smooth,variable=\x,samples=100] plot ({\x},{2*exp((-.375*(\x-10)*.375*(\x-10)))});
%             \draw[scale=1,domain=-.3:9,smooth,variable=\x,samples=100] plot ({\x},{2*exp((-.375*(\x-10)*.375*(\x-10)))});

	\end{scope}
\end{tikzpicture}

\end{center}
\caption{\small{Illustration of the Fourier transforms of the wavelet filters $g_j$ on the frequency band $[0,L]$. The Fourier transform $\widehat{\psi}$  of the mother wavelet $\psi$  is supported on the interval $[r^{-1},r]$. }}
\label{fig8}
\end{figure}
\begin{theorem}\label{thm4}
For the WH case, let $R>0$, $\delta\geq \frac{R}{2}$, and set 
\begin{equation}\label{432897498237}
a=\frac{1}{2} + \frac{\delta}{R}.
\end{equation}
For the wavelet case, let $r>1$ and set \begin{equation}\label{4328974982371}
a=\frac{r^2+1}{r^2-1}.
\end{equation}
 Then, in both cases, for every Sobolev function $f\in H^s(\R)$, $s>0$, we have 
\begin{equation}\label{main_sow11}
W_n(f)=\mathcal{O}\big({a}^{-\gamma n}\big),
\end{equation}
where $\gamma:=\min\{ 1,2s\}$.
\end{theorem}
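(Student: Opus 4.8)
The plan is to turn the statement into a single per-layer contraction estimate that is forced by the Parseval frame structure together with the analyticity and spectral-gap geometry of the filters, and then to iterate. First I would record the exact energy balance at every node. Since the modulus is $L^2$-norm-preserving, applying the Parseval frame condition \eqref{PFP} to each feature map $U[q]f$ and summing over $q\in\Lambda^n$ yields
\[
W_n(f)=\sum_{q\in\Lambda^n}\|U[q]f\ast\chi\|_2^2+W_{n+1}(f),
\]
so $W_n(f)$ is nonincreasing and $W_{n+1}(f)=\sum_{q\in\Lambda^n}\sum_{\lambda\in\Lambda}\|U[q]f\ast g_\lambda\|_2^2$. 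By Plancherel this equals $\sum_{q}\int(1-|\widehat{\chi}(\omega)|^2)\,|\widehat{U[q]f}(\omega)|^2\,\mathrm d\omega$, whose integrand is supported \emph{outside} the spectral gap (where $|\widehat{\chi}|=1$). The theorem therefore reduces to showing that, layerwise, only a fraction $\le a^{-\gamma}$ of each feature map's energy lives outside the gap.

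The crux is a single-node concentration lemma: for any feature map $w=U[q]f$ with $q\in\Lambda^n$, $n\ge 1$, one has $\int_{\text{outside gap}}|\widehat{w}(\omega)|^2\,\mathrm d\omega\le a^{-\gamma}\|w\|_2^2$. The mechanism is that every such $w$ is the modulus $|v|$ of an analytic signal $v$ whose spectrum, after demodulation (WH) or rescaling (wavelet), lies in the prototype band---$[-R,R]$ for WH, $[r^{-1},r]$ for wavelets---because the outer filter always re-band-limits. Analyticity together with the nonnegativity of $|v|$ concentrates the mass of $\widehat{|v|}$ near the origin; quantitatively I would use the pointwise estimate $\big|\,|v|'\,\big|\le|v'|$ a.e.\ (more generally $\||v|\|_{\dot H^\sigma}\le\|v\|_{\dot H^\sigma}$ for $\sigma\le 1$) combined with Bernstein's inequality on the prototype band to bound the out-of-gap tail $\int_{|\omega|\ge\delta}|\widehat{|v|}(\omega)|^2\,\mathrm d\omega\le\delta^{-2\sigma}\|v\|_{\dot H^\sigma}^2$ by a fixed multiple of $\|v\|_2^2=\|w\|_2^2$. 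Optimizing the geometry---the ratio of the gap half-width to the prototype bandwidth---is exactly what produces the stated $a=\tfrac12+\tfrac{\delta}{R}$ in the WH case (a shift of a width-$2R$ window against a gap of half-width $\delta$) and the scale-invariant $a=\tfrac{r^2+1}{r^2-1}$ in the wavelet case (a dyadic-type rescaling by $r^j$ against the gap $[-1,1]$). I would take $\sigma=\min\{\tfrac12,s\}$, which turns into $\gamma=2\sigma=\min\{1,2s\}$: the cap at $\sigma=\tfrac12$ encodes the intrinsic (Lipschitz-type) regularity the modulus can manufacture, while the $2s$ branch reflects that rough inputs seed feature maps whose heavy high-frequency tails the gap cannot fully suppress.

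Granting the lemma, iterating the recursion gives $W_{n+1}(f)\le a^{-\gamma}W_n(f)$ and hence $W_n(f)\le a^{-\gamma(n-n_0)}W_{n_0}(f)=\mathcal O(a^{-\gamma n})$, the finitely many initial layers being absorbed into the implied constant via $W_{n_0}(f)\le\|f\|_2^2<\infty$. The hard part is precisely the sharp concentration lemma: a crude derivative/Bernstein bound only delivers exponential decay with a suboptimal base (something like $(R/\delta)^{2}$, and only for $\delta>R$), whereas recovering the \emph{exact} $a$ on the full admissible range $\delta\ge\tfrac{R}{2}$ together with the exact exponent $\min\{1,2s\}$ requires exploiting the one-sided spectrum and the nonnegativity of the modulus more carefully than any generic smoothness estimate, and requires verifying that the band-limitation is reinstated at \emph{every} layer so that the contraction constant does not degrade with depth.
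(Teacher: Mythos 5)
Your proposal has a genuine gap at its central step, and you flag it yourself: everything is reduced to the ``single-node concentration lemma'' that each feature map carries at most a fraction $a^{-\gamma}$ of its $L^2$ energy outside the spectral gap, with exactly the constants of \eqref{432897498237} and \eqref{4328974982371}, yet the only mechanism you offer---Kato/diamagnetic plus Bernstein on the (demodulated) prototype band, measured \emph{additively} against the fixed gap $[-\delta,\delta]$---yields, as you concede, only a factor of the form $(R/\delta)^{2\sigma}$ and only for $\delta>R$. This is fatal rather than merely suboptimal: Theorem \ref{thm4} is claimed for all $\delta\ge R/2$, and on the sub-range $R/2<\delta\le R$ your bound is $\ge1$, i.e.\ vacuous, while the theorem still promises decay with $a=\tfrac12+\tfrac{\delta}{R}\in(1,\tfrac32]$. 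No optimization of a ``band half-width versus gap half-width'' comparison can produce $a^{-1}=\tfrac{2R}{R+2\delta}$, because that constant is not of additive type. In the paper's proof---which is \cite[Theorem 2]{WiatowskiEnergy} verbatim, the \emph{only} new content being the replacement of the modulation weights by $\nu_k=Rk+\delta-\tfrac{R^2}{R+2\delta}$ (WH) and $\nu_j=\tfrac{2r}{r^2+1}r^j$ (wavelets)---the constant arises as the worst-case \emph{relative} frequency ratio $\sup_{\lambda}\sup_{\omega\in\text{supp}(\widehat{g_\lambda})}|\omega-\nu_\lambda|/|\omega|$, which these weights are engineered to make equal to $a^{-1}$ uniformly in $\lambda$ (the denominator $|\omega|$ grows with the band index, which is why the factor does not degrade for high-frequency bands). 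Accordingly, the recursion is not a per-layer $L^2$ contraction of $W_n$ at all: it is a one-step contraction $b_{n+1}\le a^{-\gamma}b_n$ of the Sobolev-weighted feature-map energy $b_n=\sum_{q\in\Lambda^n}\int|\omega|^{\gamma}\,|\widehat{U[q]f}(\omega)|^2\,\mathrm{d}\omega$, obtained from $|v|=|e^{-2\pi i\nu_\lambda(\cdot)}v|$ together with the contractivity of the modulus on fractional Sobolev norms (ingredients you do correctly identify), followed by $W_{n+1}\le\delta^{-\gamma}b_n$ via Chebyshev in frequency.

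A second symptom that your mechanism is the wrong one: within your own setup the exponent $\gamma=\min\{1,2s\}$ cannot arise. The functions $v$ to which you apply Bernstein are \emph{always} strictly band-limited---already at the first layer, $f\ast g_\lambda$ has spectrum in $\text{supp}(\widehat{g_\lambda})$ no matter how rough $f$ is---so no step of your argument ever invokes $f\in H^s$, and your per-feature-map constant, being a property of $v$ alone, cannot depend on $s$; your narrative that rough inputs ``seed feature maps whose heavy high-frequency tails the gap cannot fully suppress'' contradicts this re-band-limitation that you elsewhere rely on. In the actual proof, smoothness enters in exactly one place: the seed of the recursion, $b_0=\int|\omega|^{\gamma}|\widehat f(\omega)|^2\,\mathrm{d}\omega\le\|f\|_{H^s}^2$, which is finite precisely when $\gamma\le 2s$; past the initialization the contraction factor is $s$-independent. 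So, besides leaving the key lemma unproved, the proposal misidentifies where both the constant $a$ and the exponent $\gamma$ come from; the correct route requires changing the recursion variable from $W_n$ to the weighted energy $b_n$ and choosing the modulation weights to equalize the relative frequency ratio across band edges.
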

\begin{proof}
The proof  is structurally very similar to that of \cite[Theorem 2]{WiatowskiEnergy} and will hence not be presented in detail. In a nutshell, the new elements needed to establish \eqref{main_sow11} are, for the WH case, to replace the so-called modulation weights $\nu_k:=Rk+\frac{2}{3}R$, $k\geq 1$,  $\nu_k:=-\nu_{|k|}$, $k\leq-1$, defined in \cite[Eq. 139]{WiatowskiEnergy} by $\nu_k:=Rk+\delta-\frac{R^2}{R+2\delta}$, $ k\geq 1$, $\nu_k:=-\nu_{|k|}$, $  k\leq-1,$ and similarly, in the wavelet case, to replace the modulation weights $\nu_j:=\frac{4}{5}2^{j}$, $j\geq 1$,  $\nu_j:=-\frac{4}{5}2^{|j|}$, $j\leq-1$, defined in \cite[Eq. 115]{WiatowskiEnergy} by the $r$-dependent modulation weights $\nu_j:=\frac{2r}{r^2+1}r^{j}$, $ j\geq 1$,   $\nu_j:=-\frac{2r}{r^2+1}r^{|j|}$, $j\leq-1$.
The rest of the proof follows closely that of \cite[Theorem 2]{WiatowskiEnergy}. \end{proof}
The identities \eqref{432897498237} and \eqref{4328974982371} show that the filter constructions we propose, indeed, allow to tune the decay factor $a$ through a single parameter, namely $R$ in the WH case  and $r$ in the wavelet case. Reducing $R$,$\,r$ results in faster energy decay (see also Figure \ref{fig6}). Particularizing \eqref{432897498237} for $R=\delta$ and \eqref{4328974982371} for $r=2$  recovers the decay factors $a=3/2$ and $a=5/3$, respectively, established in  \cite[Theorem 2]{WiatowskiEnergy}. Finally, we refer the reader to \cite[Section 3]{WiatowskiEnergy} and references therein for an overview of  previous work on the decay rate of $W_n(f)$.

\section{DEPTH-CONSTRAINED SCATTERING NETWORKS}\label{fslksflksdfjklj}
We now turn to the design of  scattering networks of fixed (possibly small) depth $N$ that capture most of the input signal's features. This will be formalized by seeking WH and wavelet filters that, for given $\varepsilon>0$ and given depth $N\in \mathbb{N}$, result in feature extractors satisfying\footnote{\small{The feature space norm is defined as $|||\Phi^n_\Omega(f)|||^2:= \sum_{q \in  \Lambda^n} \hspace{-0.05cm}\| (U[q]f)\ast \chi\|_2^2$. }}
\begin{equation}\label{bubu_main}
(1-\varepsilon)\|f\|_2^2\leq \sum_{n=0}^{N} |||\Phi_\Omega^n(f)|||^2\leq \|f\|_2^2,\quad \forall f\in L^2(\R).
\end{equation}
The lower bound in \eqref{bubu_main} guarantees that at least  $((1-\varepsilon)\cdot 100)\%$ of the input signal energy are contained in the feature vector $\{ \Phi_\Omega^n(f)\}_{n=0}^{N}$ generated in the first $N$ network layers. We note that establishing the upper bound in \eqref{bubu_main} does not pose any significant difficulties as it follows straight from the results in \cite[Appendix E]{Wiatowski_journal}. The lower bound in \eqref{bubu_main} implies a trivial null-set for the  feature extractor $\Phi_\Omega$  and thereby ensures that the only signal $f$ that is mapped to the all-zeros feature vector is $f=0$. We emphasize that the energy decay results in Theorem \ref{thm4} pertain to the feature maps $U[q]f$, whereas energy conservation according to \eqref{bubu_main} applies to the feature vector $\{ \Phi_\Omega^n(f)\}_{n=0}^{N}$.
%%%%%%%%
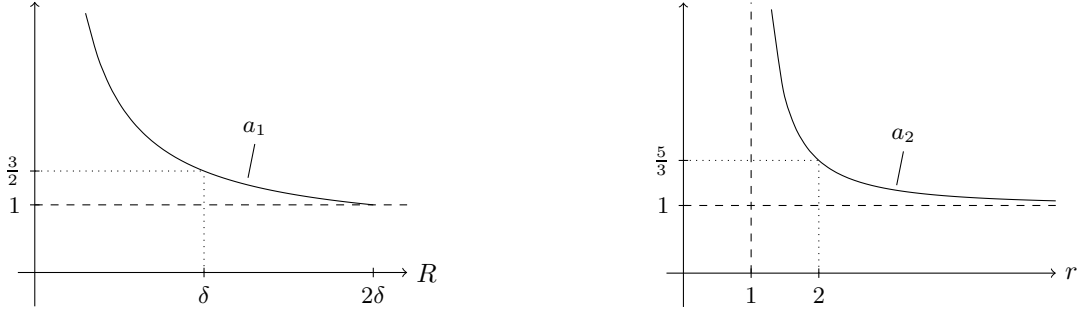
\begin{figure}
\begin{center}
\begin{minipage}{.5\textwidth}
\centering

\begin{tikzpicture}
	\begin{scope}[scale=.9]
	%Koordinatenachsen
	\draw[->] (-.25,0) -- (5.5,0) node[right] {$R$};
	\draw[->] (0,-.5) -- (0,4) node[above] {};%$|\omega-2^j\eta|$, $|\omega-2^{j-1}\eta|$};
	\draw[dashed] (0,1cm) -- (5.5,1cm);
	%\draw[dashed] (1cm,0) -- (1cm,3.1cm);
	\draw[dotted] (0,1.5cm) -- (2.5,1.5cm);
	\draw[dotted] (2.5cm,0) -- (2.5cm,1.5cm);
	
		\draw (2.5 cm,2pt) -- (2.5 cm,-2pt) node[anchor=north] {\small$\delta$};
		\draw (5 cm,2pt) -- (5 cm,-2pt) node[anchor=north] {\small$2\delta$};
		\draw (2pt, 1 cm) -- (-2pt,1 cm) node[anchor=east] {\small$1$};
		\draw (2pt, 1.5 cm) -- (-2pt,1.5 cm) node[anchor=east] {\small$\frac{3}{2}$};
			
			%\draw (1.6 cm,1.1cm) -- (1.5 cm,1.4cm) node[anchor=south] {\small$h_3$};
			\draw (3.15 cm,1.4cm) -- (3.25 cm,1.9cm) node[anchor=south] {\small$a_1$};			
			%\draw (4.75 cm,.55cm)--(4.65 cm,.85cm)  node[anchor=south] {\small$h_1$};

        %\draw (1,0) -- (5,0);
        \draw[scale=1,domain=.75:5,smooth,variable=\x] plot ({\x},{.5 + 2.5/\x});
	%\draw[thick,dashed, domain=.1:5,samples=100] plot (\x,{1/((\x/10)*(\x/10)-1)});
	%\draw[thick,smooth, domain=.5:5.5,samples=100] plot (\x,{.4*abs(\x-3.4)});
	%\draw[thick,dotted, domain=.5:5.5,samples=100] plot (\x,{.4*(1.8+0.6*\x)});
	\end{scope}
\end{tikzpicture}

\end{minipage}%
\begin{minipage}{.5\textwidth}
\centering
\begin{tikzpicture}
	\begin{scope}[scale=.9]
	%Koordinatenachsen
	\draw[->] (-.25,0) -- (5.5,0) node[right] {$r$};
	\draw[->] (0,-.5) -- (0,4) node[above] {};%$|\omega-2^j\eta|$, $|\omega-2^{j-1}\eta|$};
	\draw[dashed] (0,1cm) -- (5.5,1cm);
	\draw[dashed] (1cm,0) -- (1cm,4cm);
	\draw[dotted] (0,1.666cm) -- (2,1.666cm);
	\draw[dotted] (2cm,0) -- (2cm,1.666cm);
	
		\draw (1 cm,2pt) -- (1 cm,-2pt) node[anchor=north] {\small$1$};
		\draw (2 cm,2pt) -- (2 cm,-2pt) node[anchor=north] {\small$2$};
		\draw (2pt, 1 cm) -- (-2pt,1 cm) node[anchor=east] {\small$1$};
		\draw (2pt, 1.666 cm) -- (-2pt,1.666 cm) node[anchor=east] {\small$\frac{5}{3}$};
			
			%\draw (1.6 cm,1.1cm) -- (1.5 cm,1.4cm) node[anchor=south] {\small$h_3$};
			\draw (3.15 cm,1.3cm) -- (3.25 cm,1.8cm) node[anchor=south] {\small$a_2$};			
			%\draw (4.75 cm,.55cm)--(4.65 cm,.85cm)  node[anchor=south] {\small$h_1$};

        %\draw (1,0) -- (5,0);
        \draw[scale=1,domain=1.3:5.5,smooth,variable=\x] plot ({\x},{(\x*\x+1)/(\x*\x-1)});
	%\draw[thick,dashed, domain=.1:5,samples=100] plot (\x,{1/((\x/10)*(\x/10)-1)});
	%\draw[thick,smooth, domain=.5:5.5,samples=100] plot (\x,{.4*abs(\x-3.4)});
	%\draw[thick,dotted, domain=.5:5.5,samples=100] plot (\x,{.4*(1.8+0.6*\x)});
	\end{scope}

\end{tikzpicture}
\end{minipage}

\end{center}
\caption{\small{Illustration of the functions $a_1(R):=\frac{1}{2}+\frac{\delta}{R}$, for  $R\leq 2\delta$, (left plot) and $a_2(r):=\frac{r^2+1}{r^2-1}$, for $r>1$, (right plot). }}
\label{fig6}
\end{figure}

The next result explains how to choose $R$ in the WH and $r$ in the wavelet case so as to satisfy \eqref{bubu_main}. In particular, we shall see that for every (possibly small) $\varepsilon>0$ and every $N\in \mathbb{N}$, say $\varepsilon=0.01$ and $N=1$, there exist $R>0$ and $r>1$ such that  \eqref{bubu_main} holds. 
\begin{theorem}\label{thm6}
For the WH case, let $R>0$, $\delta \geq \frac{R}{2}$. For the wavelet case,  let $r>1$ and $\delta=1$. Moreover, take $f\in H^s(\R)\backslash\{ 0\}$, $s>0$, fix $\varepsilon \in (0,1)$ and $N\in \N$, let $l > \frac{1}{2}\, \varepsilon^{1/\gamma}\,\delta$, where $\gamma:= \min\{ 1,2s\}$, and define 
 \begin{equation*}\label{sdflkskkh}
\kappa:=\Bigg(\frac{2l\, \| f\|_{H^s}^{2/\gamma}}{\varepsilon^{1/\gamma}\delta\|f\|_2^{2/\gamma}}\Bigg)^{1/N}.
\end{equation*}
Then,  \eqref{bubu_main} holds in the WH case, for 
\begin{equation}\label{hihih2qq111}
0<R\leq \frac{\delta}{\kappa-\frac{1}{2}},
\end{equation}
and, in the wavelet case, for
\begin{equation}\label{hihih2qq}
1<r\leq \sqrt{\frac{\kappa+1}{\kappa-1}}.
\end{equation}
\end{theorem}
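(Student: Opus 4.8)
The plan is to reduce the two-sided estimate \eqref{bubu_main} to a single tail bound on the feature-map energy $W_{N+1}(f)$ and then to match the hypotheses \eqref{hihih2qq111}--\eqref{hihih2qq} against the explicit decay rate furnished by Theorem \ref{thm4}. The starting point is a layer-wise energy-conservation identity. Applying the Parseval frame property \eqref{PFP} to each feature map $U[q]f$, $q\in\Lambda^n$, and using $\|\,|U[q]f\ast g_\lambda|\,\|_2=\|U[q]f\ast g_\lambda\|_2$ together with $U[(q,\lambda)]f=|U[q]f\ast g_\lambda|$, then summing over $q\in\Lambda^n$ yields
\begin{equation*}
|||\Phi^n_\Omega(f)|||^2=W_n(f)-W_{n+1}(f),
\end{equation*}
since the output-filter terms assemble into $|||\Phi^n_\Omega(f)|||^2$ and the $g_\lambda$-terms assemble into $W_{n+1}(f)$. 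Telescoping this identity from $n=0$ to $N$ and using $W_0(f)=\|f\|_2^2$ gives
\begin{equation*}
\sum_{n=0}^{N}|||\Phi^n_\Omega(f)|||^2=\|f\|_2^2-W_{N+1}(f).
\end{equation*}
The upper bound in \eqref{bubu_main} is then immediate from $W_{N+1}(f)\geq 0$ (this is the content of \cite[Appendix E]{Wiatowski_journal}), and the lower bound in \eqref{bubu_main} is seen to be \emph{equivalent} to the tail estimate $W_{N+1}(f)\leq\varepsilon\|f\|_2^2$. This is the first, and conceptually cleanest, step.

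The second step is to feed in a non-asymptotic refinement of Theorem \ref{thm4}. Tracking the constants through the argument behind Theorem \ref{thm4} (i.e.\ through the proof of \cite[Theorem 2]{WiatowskiEnergy} with the modulation weights replaced as indicated there), one obtains, for the frequency cutoff $l$ appearing in that argument, an estimate of the form
\begin{equation*}
W_{N+1}(f)\leq\left(\frac{2l}{\delta}\right)^{\gamma} a^{-\gamma N}\,\|f\|_{H^s}^2,
\end{equation*}
with $a$ given by \eqref{432897498237} in the WH case and by \eqref{4328974982371} in the wavelet case, and $\gamma=\min\{1,2s\}$. Here $l$ plays the role of a frequency threshold separating the low-frequency part of each feature map (controlled by the Sobolev norm) from the high-frequency part (controlled by the spectral gap $[-\delta,\delta]$).

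The third step is bookkeeping linking the hypotheses to the decay factor. I would first check that \eqref{hihih2qq111} and \eqref{hihih2qq} are, via the monotonicity of $R\mapsto\frac12+\frac{\delta}{R}$ and of $r\mapsto\frac{r^2+1}{r^2-1}$, each equivalent to the single inequality $a\geq\kappa$; in the wavelet case one verifies that $r=\sqrt{(\kappa+1)/(\kappa-1)}$ produces exactly $a=\kappa$. One also checks that the standing assumption $l>\tfrac12\varepsilon^{1/\gamma}\delta$, combined with $\|f\|_{H^s}\geq\|f\|_2$, forces $\kappa>1$, so that the ranges \eqref{hihih2qq111}, \eqref{hihih2qq} are nonempty and yield $R>0$, $r>1$. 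Finally, $a\geq\kappa$ gives $a^{\gamma N}\geq\kappa^{\gamma N}=\big(\tfrac{2l}{\delta}\big)^{\gamma}\|f\|_{H^s}^2/(\varepsilon\|f\|_2^2)$ directly from the definition of $\kappa$, which inserted into the bound of the second step produces precisely $W_{N+1}(f)\leq\varepsilon\|f\|_2^2$, and hence \eqref{bubu_main} by the first step.

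The main obstacle is the second step: all the analytic work is hidden in extracting the explicit prefactor $(2l/\delta)^\gamma$ and the clean exponent $a^{-\gamma N}$ from the proof of Theorem \ref{thm4}. This requires re-running the induction over network layers while retaining constants, and in particular correctly handling the $l$-dependent split of each feature map's spectrum so that the low-frequency Sobolev contribution and the gap-induced high-frequency contribution combine into a single geometric factor in $a$. Steps one and three are then routine once this estimate is in hand.
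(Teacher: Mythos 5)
Your proposal is correct and follows essentially the same route as the paper: your steps one and two together are precisely the content of \cite[Corollary 2]{WiatowskiEnergy}, which the paper invokes as a black box (the telescoping identity underlies Theorem 1 there, and your asserted non-asymptotic tail bound $W_{N+1}(f)\leq(2l/\delta)^{\gamma}a^{-\gamma N}\|f\|_{H^s}^2$ is Theorem 2 there with the modulation weights modified as in Theorem \ref{thm4}), so both arguments reduce the claim to the sufficiency of $a\geq\kappa$. Your step three --- checking $\kappa>1$ via $\|f\|_{H^s}\geq\|f\|_2$ and $l>\tfrac{1}{2}\varepsilon^{1/\gamma}\delta$, then monotonically rearranging $\tfrac{1}{2}+\tfrac{\delta}{R}\geq\kappa$ and $\tfrac{r^2+1}{r^2-1}\geq\kappa$ into \eqref{hihih2qq111} and \eqref{hihih2qq} --- is exactly the paper's own proof.
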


\begin{proof} Let $a$ be the decay factor in \eqref{432897498237} or \eqref{4328974982371}. Then, it follows from \cite[Corollary 2]{WiatowskiEnergy} that 
\begin{equation}\label{slkjlkwj}
a \geq \Bigg(\frac{2l\, \| f\|_{H^s}^{2/\gamma}}{\varepsilon^{1/\gamma}\delta\|f\|_2^{2/\gamma}}\Bigg)^{1/N}=\kappa
\end{equation}
is sufficient for \eqref{bubu_main} to hold. In the WH case, we have $a = \frac{1}{2} +\frac{\delta}{R}$, $\delta\geq \frac{R}{2}$, which,  when combined with \eqref{slkjlkwj}, yields
\begin{equation}\label{slkjlkwj222}
\frac{1}{2} +\frac{\delta}{R} \geq \kappa.
\end{equation}
Rearranging terms in \eqref{slkjlkwj222} establishes \eqref{hihih2qq111}. Next, in the wavelet case, we have $a = \frac{r^2+1}{r^2-1}$, $r>1$, 
which, when combined with \eqref{slkjlkwj}, leads to
\begin{equation}\label{slkjlkwj2}
\frac{r^2+1}{r^2-1} \geq \kappa.
\end{equation}
Finally, rearranging terms in \eqref{slkjlkwj2} establishes \eqref{hihih2qq}  and thereby completes the proof.
\end{proof}

\section{NUMBER OF OPERATIONALLY SIGNIFICANT NODES}\label{lkfnkjfjkewh}

%The following spectral decay behavior of Sobolev functions is of relevance throughout the paper. For every $f\in H^s(\R)$,  there exist $\beta,\mu,L>0$ (all depending on $f$) such that 
%\begin{equation}\label{lkfndskjfnkj}
%|\widehat{f}(\omega)|\leq \mu(1+|\omega|^2)^{-(\frac{s}{2}+\frac{1}{4}+\frac{\beta}{4})},\quad \text{ a.e. } |\omega|\geq L,
%\end{equation} where $L>0$ plays the role of an effective bandwidth of $f$ (see, e.g., \cite[Sec. 6.2.1]{Grafakos2}). 

While the results presented thus far were of mathematically strict nature, in the present section, we shall allow ourselves to argue on a less formal level. The energy decay and conservation results established so far assume an infinite number of filters in the module $(\Psi,|\cdot|,\text{Id})$, and hence an infinite number of nodes in each network layer. Formally, this is a consequence of the filters $\{g_{\lambda}\}_{\lambda\in \Lambda}$ depending on an index set $\Lambda$ with  $\text{card}(\Lambda)=\infty$, which, in turn, is needed to satisfy the frame condition \eqref{PFP}. However, real-world input signals $f$ can be considered effectively band-limited \cite{Slepian}, i.e., the support region that contains most of the energy of $\widehat{f}$---denoted by $\text{esupp}(\widehat{f})$---satisfies $\text{esupp}(\widehat{f})=[-L,L]$, for some $L>0$; we shall refer to $L$ as effective bandwidth. Since the function $f\ast g_\lambda$ is strictly band-limited (owing to $g_\lambda$ strictly band-limited, by assumption), and the modulus non-linearity results in (roughly) a doubling of bandwidth, as heuristically argued below, we allow ourselves to assume in the remainder of the paper that all feature maps are effectively band-limited. Consequently, it is sensible to ask how many nodes are actually needed in the $n$-th network layer to capture the feature map energy contained in $\text{esupp}(\widehat{U[q]f})$, $q\in \Lambda^{n-1}$. We formalize this question by defining the number of \emph{operationally significant nodes} in the $n$-th network layer as  
\begin{align}%\label{sdfdddddas4444}
\Xi(n):=&\,\text{card}\Big(\Big\{ U[q]f \, \Big| \, q \in \Lambda^{n}_{\text{sig}}\Big\}\Big),\quad n\geq 0,\label{sdfdddddas4444}
\end{align}
where the set $\Lambda^{n}_{\text{sig}}$ is defined (recursively) according to $\Lambda^{0}_{\text{sig}}:=\Lambda^0$, $\Lambda^{1}_{\text{sig}}:=\big\{ \lambda \in \Lambda \ \big| \ \text{esupp}(\widehat{f}\,)\cap\text{supp}(\widehat{g_\lambda}) \neq \emptyset\big\},$ and 
\begin{equation}\label{selkflklksdkdsnl}
\Lambda^{n}_{\text{sig}}:= \bigg\{ (q,\lambda) \ \bigg|\ q\in \Lambda^{n-1}_{\text{sig}} \ \text{ and } \ \lambda \in \Lambda \ \text{  with  } \ \text{esupp}(\widehat{U[q]f})\cap\text{supp}(\widehat{g_\lambda}) \neq \emptyset\bigg\},\quad n\geq2.
\end{equation}
For the root of the network, i.e., $n=0$, we have $\Xi(0)=1$, owing to $U[q]f=U[e]f=f$.  The definition of $\Xi(n)$ accounts for  a topology reduction, relative to the full tree in Figure \ref{fig:gsn}, caused by i) feature map symmetries  (see \eqref{dfndkslflkdsnfff} below) and reflected by counting the number of distinct\footnote{We recall that the cardinality of a set equals the number of \emph{distinct} elements in the set, e.g., $\text{card}(\{ a,a,b\})=2$, for $a\neq b$.} feature maps $U[q]f$ in \eqref{sdfdddddas4444}  only\footnote{We emphasize that the location (in the full tree in Figure \ref{fig:gsn}) of  identical feature maps (not counted in $\Xi(n)$) is uniquely determined. In practice, it therefore suffices to, indeed, compute these features only once and arrange identical copies accordingly in $\Phi_\Omega(f)$.}, and ii) width pruning owing to strict band-limitation of the filters $g_\lambda$ and hence effective band-limitation of the feature maps $U[q]f$. Note that the specific spectral structure of $f$, e.g., a multi-band structure, can lead to further topology reduction. This effect will, however, not be taken into account in the remainder of the paper.  We honor the dependence of $\Xi(n)$  on i) the filters in $\Psi$ (and their parameters $\delta$, $r$, and $R$) and ii) the effective bandwidth $L$ of the input signal $f$ by employing the notation $\Xi_\text{WH}(n,R,\delta,L)$ and $\Xi_{\text{wav}}(n,r,L)$ wherever appropriate.

\begin{figure}
\begin{center}

\begin{tikzpicture}
	\begin{scope}[scale=.9]
	%Koordinatenachsen
	\draw[->] (-7.5,0) -- (7.5,0) node[right] {$\omega$};
	\draw[->] (0,-.5) -- (0,2.5) node[above] {};%$|\omega-2^j\eta|$, $|\omega-2^{j-1}\eta|$};
%	\draw[dashed] (0,1cm) -- (5.5,1cm);
%	\draw[dashed] (1cm,0) -- (1cm,3.1cm);
%	\draw[dotted] (0,1.666cm) -- (2,1.666cm);
%	\draw[dotted] (2cm,0) -- (2cm,1.666cm);
%	
		\draw (1.5 cm,2pt) -- (1.5 cm,-2pt) node[anchor=north] {\footnotesize$\delta$};
		\draw (-1.5 cm,2pt) -- (-1.5 cm,-2pt) node[anchor=north] {\footnotesize$-\delta$};
		%\draw (-2 cm,2pt) -- (-2 cm,-2pt) node[anchor=north] {\footnotesize$-R$};
		%\draw (2 cm,2pt) -- (2 cm,-2pt) node[anchor=north] {\footnotesize$R$};

		\draw (3.5 cm,2pt) -- (3.5 cm,-2pt) node[anchor=north] {\footnotesize$\delta+R$};
		\draw (5.5 cm,2pt) -- (5.5 cm,-2pt) node[anchor=north] {\footnotesize$\delta+2R$};
		\draw (-3.5 cm,2pt) -- (-3.5 cm,-2pt) node[anchor=north] {\footnotesize$-(\delta+R)$};
		\draw (-5.5 cm,2pt) -- (-5.5 cm,-2pt) node[anchor=north] {\footnotesize$-(\delta+2R)$};
		%\draw (7.5 cm,2pt) -- (7.5 cm,-2pt) node[anchor=north] {\footnotesize$\delta+3R$};
		\draw (.5 cm,2pt) -- (.5 cm,-2pt) node[anchor=north] {\footnotesize$L$};
		\draw (-.5 cm,2pt) -- (-.5 cm,-2pt) node[anchor=north] {\footnotesize$-L$};

		\draw (2pt, 2 cm) -- (-2pt,2 cm) node[anchor=south east] {\footnotesize$1$};
		%\draw (2pt, 1.666 cm) -- (-2pt,1.666 cm) node[anchor=east] {\small$\frac{5}{3}$};
			
			%\draw (1.6 cm,1.1cm) -- (1.5 cm,1.4cm) node[anchor=south] {\small$h_3$};
			\draw (2.75 cm,1.3cm) -- (2.5 cm,1.8cm) node[anchor=south] {\small$\widehat{g_{_1}}$};
			\draw (-2.75 cm,1.3cm) -- (-2.5 cm,1.8cm) node[anchor=south] {\small$\widehat{g_{_{-1}}}$};
%			\draw (4.75 cm,1.3cm) -- (4.5 cm,1.8cm) node[anchor=south] {\small$\widehat{g_2}$};
%			\draw (6.75 cm,1.3cm) -- (6.5 cm,1.8cm) node[anchor=south] {\small$\widehat{g_3}$};
%			\draw (8.75 cm,1.3cm) -- (8.5 cm,1.8cm) node[anchor=south] {\small$\widehat{g_4}$};
%			\draw (.5 cm,1.55cm) -- (1cm,1.95cm) node[anchor=south] {\small$\widehat{g}$};
			%\draw (4.75 cm,.55cm)--(4.65 cm,.85cm)  node[anchor=south] {\small$h_1$};

        		\draw[dashed] (.5,0) -- (.5,2);
		\draw[dashed] (-.5,0) -- (-.5,2);
		\draw[dashed] (-.5,2) -- (.5,2);
               % \draw[scale=1,domain=-1:1,dashed,variable=\x,samples=100] plot ({\x},{2*exp((-(2.55*(\x))*(2.55*(\x))))});
                 %\draw[scale=1,domain=-2:2,dashed,variable=\x,samples=200] plot ({\x},{2*max(0,1-abs(\x/2))});
                  \draw[scale=1,domain=1.5:5.5,variable=\x,samples=200] plot ({\x},{2*max(0,1-abs((\x-3.5)/2))});
                  \draw[scale=1,domain=-5.5:-1.5,variable=\x,samples=200] plot ({\x},{2*max(0,1-abs((\x+3.5)/2))});
%                  \draw[scale=1,domain=3.:9,variable=\x,samples=200] plot ({\x},{2*max(0,1-abs((\x-5.5)/2))});
%                  \draw[scale=1,domain=0:10,variable=\x,samples=200] plot ({\x},{2*max(0,1-abs((\x-7.5)/2))});
%                  \draw[scale=1,domain=0:10,variable=\x,samples=200] plot ({\x},{2*max(0,1-abs((\x-9.5)/2))});

	\end{scope}
\end{tikzpicture}

\begin{tikzpicture}
	\begin{scope}[scale=.9]
	%Koordinatenachsen
	\draw[->] (-7.5,0) -- (7.5,0) node[right] {$\omega$};
	\draw[->] (0,-.5) -- (0,2.5) node[above] {};%$|\omega-2^j\eta|$, $|\omega-2^{j-1}\eta|$};
%	\draw[dashed] (0,1cm) -- (5.5,1cm);
%	\draw[dashed] (1cm,0) -- (1cm,3.1cm);
%	\draw[dotted] (0,1.666cm) -- (2,1.666cm);
%	\draw[dotted] (2cm,0) -- (2cm,1.666cm);
%	
		\draw (.5 cm,2pt) -- (.5 cm,-2pt) node[anchor=north] {\footnotesize$\delta$};
		\draw (-.5 cm,2pt) -- (-.5 cm,-2pt) node[anchor=north] {\footnotesize$-\delta$};
		%\draw (-2 cm,2pt) -- (-2 cm,-2pt) node[anchor=north] {\footnotesize$-R$};
		%\draw (2 cm,2pt) -- (2 cm,-2pt) node[anchor=north] {\footnotesize$R$};

		\draw (3.5 cm,2pt) -- (3.5 cm,-2pt) node[anchor=north] {\footnotesize$  \ \ \ \delta+(k-1)R$};
		%\draw (5.5 cm,2pt) -- (5.5 cm,-2pt) node[anchor=north] {\footnotesize$\delta+(k'+1)R$};
		\draw (-3.5 cm,2pt) -- (-3.5 cm,-2pt) node[anchor=north] {\footnotesize$-(\delta+(k-1)R) \ \ \ \ \ \  \ \ \ $};
		%\draw (-5.5 cm,2pt) -- (-5.5 cm,-2pt) node[anchor=north] {\footnotesize$-(\delta+(k'+1)R)$};
		%\draw (7.5 cm,2pt) -- (7.5 cm,-2pt) node[anchor=north] {\footnotesize$\delta+3R$};
		\draw (2. cm,2pt) -- (2. cm,-2pt) node[anchor=north] {\footnotesize$L$};
		\draw (-2. cm,2pt) -- (-2. cm,-2pt) node[anchor=north] {\footnotesize$-L$};

		\draw (2pt, 2 cm) -- (-2pt,2 cm) node[anchor=south east] {\footnotesize$1$};
		%\draw (2pt, 1.666 cm) -- (-2pt,1.666 cm) node[anchor=east] {\small$\frac{5}{3}$};
			
			%\draw (1.6 cm,1.1cm) -- (1.5 cm,1.4cm) node[anchor=south] {\small$h_3$};
			\draw (4.75 cm,1.3cm) -- (4.5 cm,1.8cm) node[anchor=south] {\small$\widehat{g_{_{k}}}$};
			\draw (-4.75 cm,1.3cm) -- (-4.5 cm,1.8cm) node[anchor=south] {\small$\widehat{g_{_{-k}}}$};
%			\draw (4.75 cm,1.3cm) -- (4.5 cm,1.8cm) node[anchor=south] {\small$\widehat{g_2}$};
%			\draw (6.75 cm,1.3cm) -- (6.5 cm,1.8cm) node[anchor=south] {\small$\widehat{g_3}$};
%			\draw (8.75 cm,1.3cm) -- (8.5 cm,1.8cm) node[anchor=south] {\small$\widehat{g_4}$};
%			\draw (.5 cm,1.55cm) -- (1cm,1.95cm) node[anchor=south] {\small$\widehat{g}$};
			%\draw (4.75 cm,.55cm)--(4.65 cm,.85cm)  node[anchor=south] {\small$h_1$};

        		\draw[dashed] (2.,0) -- (2.,2);
		\draw[dashed] (-2.,0) -- (-2.,2);
		\draw[dashed] (-2.,2) -- (2.,2);
               % \draw[scale=1,domain=-1:1,dashed,variable=\x,samples=100] plot ({\x},{2*exp((-(2.55*(\x))*(2.55*(\x))))});
                 %\draw[scale=1,domain=-2:2,dashed,variable=\x,samples=200] plot ({\x},{2*max(0,1-abs(\x/2))});
                  \draw[scale=1,domain=3.5:7.,variable=\x,samples=200] plot ({\x},{2*max(0,1-abs((\x-5.5)/2))});
                  \draw[scale=1,domain=-7.:-3.5,variable=\x,samples=200] plot ({\x},{2*max(0,1-abs((\x+5.5)/2))});
%                  \draw[scale=1,domain=3.:9,variable=\x,samples=200] plot ({\x},{2*max(0,1-abs((\x-5.5)/2))});
%                  \draw[scale=1,domain=0:10,variable=\x,samples=200] plot ({\x},{2*max(0,1-abs((\x-7.5)/2))});
%                  \draw[scale=1,domain=0:10,variable=\x,samples=200] plot ({\x},{2*max(0,1-abs((\x-9.5)/2))});

	\end{scope}
\end{tikzpicture}

\end{center}
\caption{\small{Top row: If $L\leq\delta$, then $\text{esupp}(\widehat{f}\,)=[-L,L]$ is fully contained in the spectral gap $[-\delta,\delta]$ left by the filters $\{ g_k\}_{k\in \mathbb{Z}\backslash\{ 0\}}$. Bottom row: If $L>\delta$ and $|k|> \lfloor (L-\delta)R^{-1} + 1 \rfloor $, the spectral supports of the filters $g_k$ do not overlap with $\text{esupp}(\widehat{f}\,)=[-L,L]$.}}
\label{fig1234}
\end{figure}
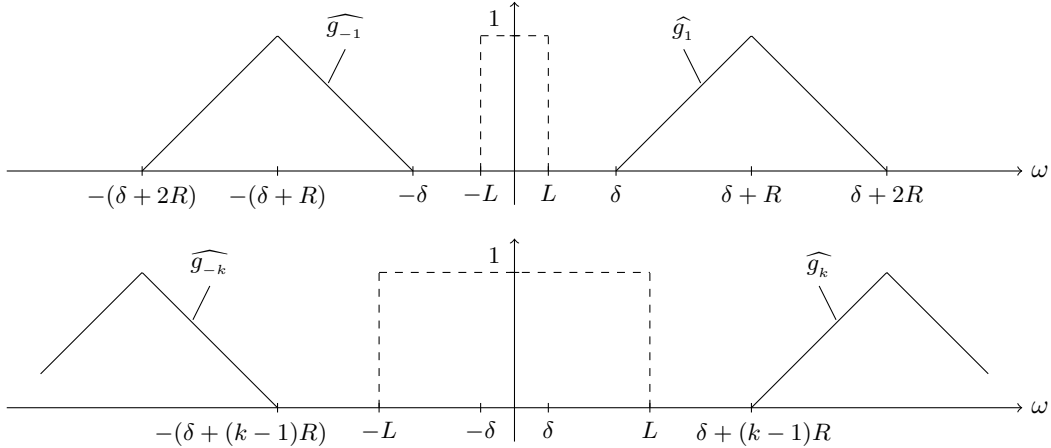

Next, our goal is to determine $\Xi(n)$, for $n\geq1$. Starting with the WH case and $n=1$, we have
$\text{esupp}(\widehat{f}\,)\, \cap\,\text{supp}(\widehat{g_k})\neq \emptyset
$ if $L>\delta$ (which prevents $\text{esupp}(\widehat{f}\,)=[-L,L]$ from being fully contained in the spectral gap $[-\delta,\delta]$ left by the filters $\{ g_k\}_{k\in \mathbb{Z}\backslash\{ 0\}}$, see Figure \ref{fig1234}, top row) and   
$|k|\leq \lfloor (L-\delta)R^{-1} + 1\rfloor$ (see Figure \ref{fig1234}, bottom row). This yields 
\begin{equation*}\label{sdfnsdkjfhaaaa}
\Lambda^{1}_{\text{WH}, \, \text{sig}}=\big\{k \in \mathbb{Z}\backslash \{ 0\} \ \big| \ |k|\leq \lfloor (L-\delta)R^{-1}+1\rfloor \big\},\hspace{1cm} \text{if } L>\delta,
\end{equation*}
and $\Lambda^{1}_{\text{WH}}=\emptyset$, if $L\leq \delta$, which, in turn, implies 
\begin{equation*}\label{sdfnsdkjfhaaaa}
\Xi_{\text{WH}}(1,R,\delta,L)
=2 \lfloor (L-\delta)R^{-1}+1\rfloor, \hspace{1cm}\text{if } L>\delta,
\end{equation*}
and $\Xi_{\text{WH}}(1,R,\delta,L)=0$, if $L\leq \delta$. Next, determining $\Xi(2)$ requires, by \eqref{selkflklksdkdsnl}, studying the spectral characteristics of the feature maps $U[k]f=|f\ast g_k|$. We note that, owing to the modulus non-linearity, characterizing the effective spectral support of $U[k]f=|f\ast g_k|$ is non-trivial. We can, however, take a cue from the behavior of the \emph{squared} modulus non-linearity, i.e., 
$$W[k]f :=|f\ast g_{k}|^2 = (f\ast g_{k}) (\overline{f\ast g_{k}}),$$
and note that $\widehat{W[k]f}$ is simply the auto-correlation of $\widehat{f}\,\widehat{g_k}$ (see the second row in  Figure \ref{fig10}). The squared modulus non-linearity therefore doubles the spectral support of $f\ast g_{k}$ and ``demodulates''  in the sense of the  spectrum $\widehat{W[k]f}$ being located symmetrically around the origin, both  irrespectively of the spectral location of $f\ast g_{k}$. The key observation is now that the modulus non-linearity behaves  similarly, as illustrated in Figure \ref{fig10}, third row.  In the following, we shall therefore allow ourselves to work with $\text{esupp}(\widehat{U[k]f}\,)\subseteq[-2R,2R]$, for all $k\in \mathbb{Z}\backslash\{ 0\}$. We hasten to add that this statement is based solely on numerical evidence   and we do not have a corresponding formal result. It is interesting to observe that the sigmoid, rectified linear unit, and hyperbolic tangent non-linearities, all exhibit very different behavior in this regard (see Figure \ref{fig10}, bottom row, for an illustration for the rectified linear unit). 
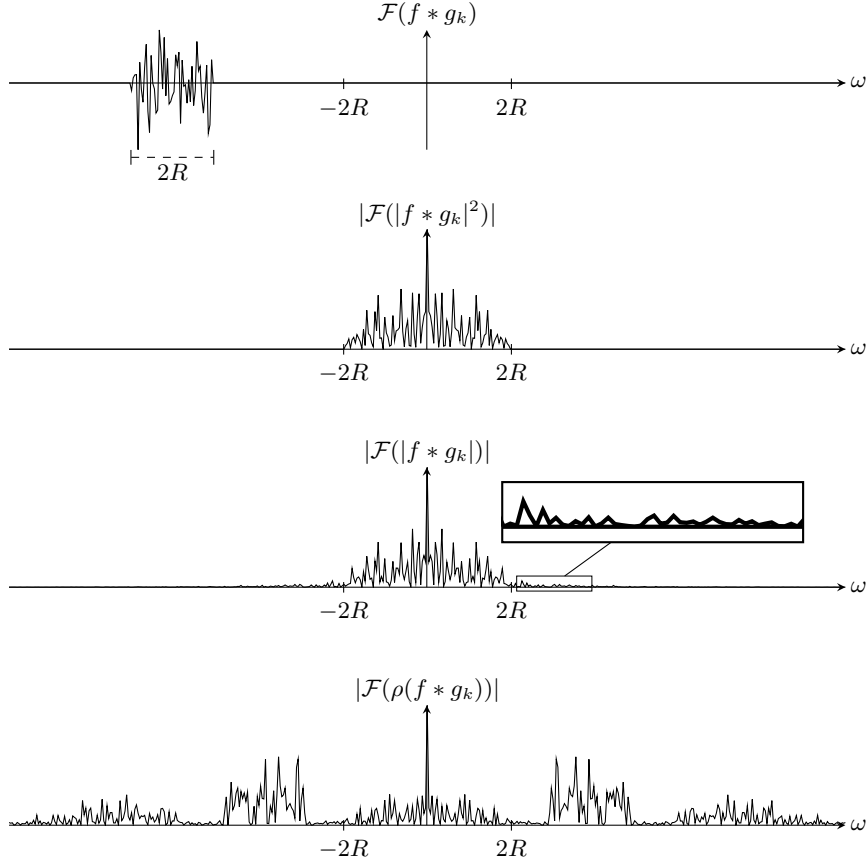
\begin{figure}
\centering

	\begin{tikzpicture}[scale=1]
	\tikzstyle{inti}=[draw=none,fill=none]

	\begin{axis}[width = 5in, height = 1.25in, axis lines=center, ticks=none]
    		\addplot +[mark=none,solid,black] table[x index=0,y index=1]{A1.dat};
    	\end{axis}
		\draw (4.45 cm,27pt) -- (4.45 cm,23pt) node[anchor=north] {\footnotesize$-2R$};
	\draw (6.68cm,27pt) -- (6.68 cm,23pt) node[anchor=north] {\footnotesize$2R$};
	
	\node[inti] at (2.175,-0.3) {\footnotesize $2R$};	
	\node[inti] at (5.5650,1.8cm) {\footnotesize $\mathcal{F}(f\ast g_{k})$};	
	\node[inti] at (11.3,.9cm) {\footnotesize $\omega$};
	
	\draw[dashed,|-|] (1.6150cm,-.1cm) -- (2.73cm,-.1cm);
	\end{tikzpicture}

	\begin{tikzpicture}[scale=1]
		\tikzstyle{inti}=[draw=none,fill=none]
	\begin{axis}[width = 5in, height = 1.25in, axis lines=center, ticks=none]
    		\addplot +[mark=none,solid,black] table[x index=0,y index=1]{A3.dat};
    	\end{axis}
	\draw (4.45 cm,2pt) -- (4.45 cm,-2pt) node[anchor=north] {\footnotesize$-2R$};
	\draw (6.68cm,2pt) -- (6.68 cm,-2pt) node[anchor=north] {\footnotesize$2R$};
		\node[inti] at (5.5650,1.8cm) {\footnotesize $|\mathcal{F}(|f\ast g_{k}|^2)|$};	
	
		\node[inti] at (11.3,.cm) {\footnotesize $\omega$};
	\end{tikzpicture}

\vspace{0.5cm}

	\begin{tikzpicture}[scale=1,spy using outlines=
	{rectangle, magnification=4, connect spies}]\vspace{0.5cm}
		\tikzstyle{inti}=[draw=none,fill=none]
	\begin{axis}[width = 5in, height = 1.25in, axis lines=center, ticks=none]
    		\addplot +[mark=none,solid,black] table[x index=0,y index=1]{A2.dat};
    	\end{axis}
		\draw (4.45 cm,2pt) -- (4.45 cm,-2pt) node[anchor=north] {\footnotesize$-2R$};
	\draw (6.68cm,2pt) -- (6.68 cm,-2pt) node[anchor=north] {\footnotesize$2R$};
		\node[inti] at (11.3,0cm) {\footnotesize $\omega$};
		\node[inti] at (5.5650,1.8cm) {\footnotesize $|\mathcal{F}(|f\ast g_{k}|)|$};
	\coordinate (spypoint) at (7.25,0.05);
  \coordinate (magnifyglass) at (8.558,1);
  \spy [black, width=4cm, height=.8cm] on (spypoint)
   in node[fill=white] at (magnifyglass);	
		
	\end{tikzpicture}
	
\vspace{0.5cm}
	\begin{tikzpicture}[scale=1]
		\tikzstyle{inti}=[draw=none,fill=none]
	\begin{axis}[width = 5in, height = 1.25in, axis lines=center, ticks=none]
    		\addplot +[mark=none,solid,black] table[x index=0,y index=1]{A4.dat};
    	\end{axis}
	\draw (4.45 cm,2pt) -- (4.45 cm,-2pt) node[anchor=north] {\footnotesize$-2R$};
	\draw (6.68cm,2pt) -- (6.68 cm,-2pt) node[anchor=north] {\footnotesize$2R$};
		\node[inti] at (5.5650,1.8cm) {\footnotesize $|\mathcal{F}(\rho(f\ast g_{k}))|$};	
	
		\node[inti] at (11.3,.cm) {\footnotesize $\omega$};
	\end{tikzpicture}

\caption{\small{Illustration of the demodulation and bandwidth doubling effect of the squared modulus (second row) and the modulus non-linearities (third row). The WH filters have prototype function $g$ with $\text{supp}(\widehat{g})=[-R,R]$. The rectified linear unit non-linearity (bottom row) is defined as $\rho(z):=\max\{0,\text{Re}(z)\}+\max\{0,\text{Im}(z)\}$, $z\in \mathbb{C}$.}}
\label{fig10}
\end{figure}
By induction over $n$, one can now  show that, for all $n\geq2$, we have
\begin{equation}\label{sdfnsdkjfhaaaa1111}
\Lambda^{n}_{\text{WH}, \, \text{sig}}
=\big\{(q,k) \ \big| \ q \in \Lambda^{n-1}_{\text{WH}, \, \text{sig}} \ \text{ and } |k|\leq \lfloor 3 - \delta R^{-1}\rfloor \big\} , \hspace{1cm} \text{if }2R,L>\delta, 
\end{equation}
and $\Lambda^{n}_{\text{WH}, \, \text{sig}}=0$, else, which implies
\begin{align}\label{sdlkfldsk4444f1222}
\Xi_{\text{WH}}(n,R,\delta,L)&=\begin{cases}
1, & \text{if } n=0,\\
2 \big\lfloor \frac{L-\delta}{R} + 1\big\rfloor,& \text{if } n=1 \text{ and } L>\delta,\\
2 \big\lfloor \frac{L-\delta}{R} + 1\big\rfloor\big\lfloor 3-\frac{\delta}{R}\big\rfloor^{n-1},& \text{if } n\geq2 \text{ and } 2R,L> \delta,\\
\end{cases}
\end{align}
and $\Xi_{\text{WH}}(n,R,\delta,L)=0$, else. We remark that  \eqref{sdlkfldsk4444f1222} follows from \eqref{sdfnsdkjfhaaaa1111} upon noting that,  from the second network layer onwards, either $U[(k_1,\dots,k_{n-1},-k_n)]f$ or $U[(k_1,\dots,k_{n-1},k_n)]f$ only contribute to $\Xi(n)$ (which, as explained above, only counts the number of \emph{distinct} feature maps). This follows from the symmetry relation  
\begin{align}
U[(k_1,\dots,k_{n-1},-k_n)]f &= U[-k_n]U[(k_1,\dots,k_{n-1})]f=\big|\underbrace{(U[(k_1,\dots,k_{n-1})]f)}_{\text{real-valued}} \ast g_{-k_n}\big|\nonumber \\
&=\big|(U[(k_1,\dots,k_{n-1})]f) \ast g_{k_n}\big|=  U[(k_1,\dots,k_n)]f, \quad \forall n\geq 2,\label{dfndkslflkdsnfff}
\end{align}
where the first equality in \eqref{dfndkslflkdsnfff} is by the following. 
%%%%%%%%%%%%
\begin{lemma}\label{dsflsnlddddd}
Let $f\in L^2(\R)$ be real-valued and $g_{\lambda}$ either a WH or a wavelet filter as defined in Section \ref{Sec:prop}. Then, we have 
$$
U[-\lambda]f=|f\ast g_{-\lambda}| = |f\ast g_{\lambda}| =U[\lambda]f, \ \forall \lambda \in \Lambda. 
$$
\end{lemma}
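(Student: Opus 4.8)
The plan is to pass to the frequency domain and establish the stronger \emph{pointwise} identity $f\ast g_{-\lambda}=\overline{f\ast g_{\lambda}}$, from which equality of the moduli is immediate. Writing $h_\lambda:=f\ast g_\lambda$, the convolution theorem (applicable since $f\in L^2(\R)$ and $g_\lambda\in L^1(\R)\cap L^2(\R)$) records $\widehat{h_\lambda}=\widehat{f}\,\widehat{g_\lambda}$. I would then marshal three symmetries: (i) because $f$ is real-valued, $\widehat{f}$ is Hermitian, $\widehat{f}(-\omega)=\overline{\widehat{f}(\omega)}$; (ii) because $\widehat{g_\lambda}$ is real-valued by construction, $\overline{\widehat{g_\lambda}(\omega)}=\widehat{g_\lambda}(\omega)$; and (iii) the symmetry relation \eqref{sdlsdhkdjfshf}, which gives $\widehat{g_\lambda}(-\omega)=\widehat{g_{-\lambda}}(\omega)$ for all $\lambda\in\Z\backslash\{0\}$.

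The computation then proceeds via the elementary Fourier identity $\widehat{\overline{h}}(\omega)=\overline{\widehat{h}(-\omega)}$. Applying it to $h_\lambda$ and inserting the three symmetries in turn yields
\begin{equation*}
\widehat{\overline{h_\lambda}}(\omega)=\overline{\widehat{f}(-\omega)}\;\overline{\widehat{g_\lambda}(-\omega)}=\widehat{f}(\omega)\,\widehat{g_\lambda}(-\omega)=\widehat{f}(\omega)\,\widehat{g_{-\lambda}}(\omega)=\widehat{h_{-\lambda}}(\omega).
\end{equation*}
Since the Fourier transform is injective on $L^2(\R)$, this forces $\overline{h_\lambda}=h_{-\lambda}$ almost everywhere, and hence $|f\ast g_{-\lambda}|=|h_{-\lambda}|=|\overline{h_\lambda}|=|h_\lambda|=|f\ast g_\lambda|$, which is exactly the asserted chain of equalities $U[-\lambda]f=U[\lambda]f$.

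I do not expect a genuine obstacle; the two points needing minor care are verifying the auxiliary facts used above. The real-valuedness of $\widehat{g_\lambda}$ follows directly from the constructions in Section \ref{Sec:prop}: in the WH case $\widehat{g_k}(\omega)=\widehat{g}(\omega\mp(R|k|+\delta))$ is a translate of the real-valued $\widehat{g}$, and in the wavelet case $\widehat{g_j}$ is a (possibly reflected) dilate of the real-valued $\widehat{\psi}$, so real-valuedness is preserved in both families. The symmetry \eqref{sdlsdhkdjfshf} is likewise built into the constructions, as noted in the text preceding it. The remaining Fourier-domain manipulations are routine given the $L^1(\R)\cap L^2(\R)$ membership of the filters, so the crux of the argument is simply the correct bookkeeping of the conjugation-and-reflection symmetries that lets the minus sign on $\lambda$ be absorbed by complex conjugation of the feature map.
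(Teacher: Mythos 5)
Your proof is correct and takes essentially the same route as the paper: the paper's proof is a one-line appeal to ``basic Fourier calculus and the symmetry property \eqref{sdlsdhkdjfshf},'' and your argument is precisely that calculation carried out in detail, establishing $f\ast g_{-\lambda}=\overline{f\ast g_{\lambda}}$ via Hermitian symmetry of $\widehat{f}$, real-valuedness of $\widehat{g_\lambda}$, and \eqref{sdlsdhkdjfshf}. Your verification that real-valuedness of $\widehat{g_\lambda}$ and the symmetry \eqref{sdlsdhkdjfshf} indeed hold for both the WH and wavelet constructions is a welcome bit of added rigor over the paper's sketch.
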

\vspace{-0.3cm}
\begin{proof}The proof follows from basic Fourier calculus and by exploiting the symmetry property \eqref{sdlsdhkdjfshf}.
\end{proof} 
For the wavelet case, arguments similar to those leading to \eqref{sdfnsdkjfhaaaa1111} yield, for all $n\geq1$,
\begin{equation}\label{fsdklfjddsdfdfsdsf2}
\Lambda^{n}_{\text{wav}, \, \text{sig}}
=\big\{(q,j) \ \big| \ q \in \Lambda^{n-1}_{\text{wav}, \, \text{sig}} \ \text{ and } |j|\leq \lfloor \log_r(L^{(n-1)}) + 1\rfloor \big\} , \hspace{1cm} \text{if } L^{(n-1)}>1,
\end{equation}
and $\Lambda^{n}_{\text{wav}, \, \text{sig}}=\emptyset$, if $L^{(n-1)}\leq1$, where $L^{(n)}:=L(r^2-1)^n$, for $n\geq0$. This implies  
\begin{align}\label{slkndlnkddd}
\Xi_{\text{wav}}(n,r,L)=\begin{cases}
1, & \text{if } n=0,\\
2\lfloor \text{log}_r(L) + 1\big\rfloor,& \text{if } n=1 \text{ and } L>1,\\
\mathcal{O}\big(\text{log}^n_r(L) +2^n(n-1)!\big),& \text{if } n\geq 2 \text{ and } L>1 \text{ and } r> \sqrt{2},\\
2\lfloor \text{log}_r(L)+ 1 \big\rfloor^n, & \text{if } n\geq 2 \text{ and } L > 1 \text{ and } r=\sqrt{2},\\
\mathcal{O}\big(\text{log}^n_r(L)\big),& \text{if } M> n\geq 2 \text{ and } L>1 \text{ and } r< \sqrt{2},
\end{cases}
\end{align}
and $\Xi_{\text{wav}}(n,r,L)=0$, else, where $M:= 1+\text{log}_{r^2-1}(L).$  We can see that the parameter $r>1$ of the mother wavelet crucially impacts the index sets \eqref{fsdklfjddsdfdfsdsf2} and thereby the number of operationally significant nodes \eqref{slkndlnkddd}. Specifically, $r$  determines whether the effective bandwidths $L^{(n)}=L(r^2-1)^n$ of the feature maps increase, decrease, or remain constant as the layer index $n$ increases. For $r>\sqrt{2}$ we have bandwidth expansion, for $r<\sqrt{2}$   bandwidth contraction, and for $r=\sqrt{2}$ the effective bandwidths of the feature maps $U[q]f$ remain constant, i.e., $L^{(n)}=L$, for all $n\in \mathbb{N}$. 

\section{NETWORK TOPOLOGY INDUCED BY\\ OPERATIONALLY SIGNIFICANT NODES}
\label{dfsdf}
The scattering network architecture defined in Section \ref{architecture} has a tree topology with an infinite number of nodes per layer. The analysis in the previous section revealed, however, that the number $\Xi(n)$ of operationally significant nodes is finite in every network layer $n=0,\dots,N$. The goal of this section is to determine and characterize the network topology and the corresponding feature vector $\{ \Phi_\Omega^n(f)\}_{n=0}^{N}$ induced by the operationally significant nodes. For WH filters, we distinguish between the following  cases:
\begin{itemize}
\item[i)]{\emph{Shallow feature extraction}: If $L\leq\delta$ (i.e., the effective spectral support of the input signal $f$ is fully contained in the spectral gap $[-\delta,\delta]$, see Figure \ref{fig1234}, top row), then $\Xi_{\text{WH}}(0,R,\delta,L)=1$ and $\Xi_{\text{WH}}(n,R,\delta,L)=0$,  $n\geq1$. The feature vector $\{ \Phi_\Omega^n(f)\}_{n=0}^{N}$ consists of a single element, namely $f\ast \chi$, which is simply the output at the root of the network.}
\item[ii)]{\emph{Single-layer network}: If $L>\delta$ and $2R\leq\delta$ (i.e.,  the effective spectral support  of all feature maps is fully contained in the spectral gap $[-\delta,\delta]$), then $\Xi_{\text{WH}}(0,R,\delta,L)=1$, $\Xi_{\text{WH}}(1,R,\delta,L)= 2 \big\lfloor \frac{L-\delta}{R} + 1\big\rfloor$, and $\Xi_{\text{WH}}(n,R,\delta,L)=0$, $n\geq2$, which renders the network to have a single layer only. The corresponding  feature vector is given by $\{ \Phi_\Omega^n(f)\}_{n=0}^{N}=\{ f\ast \chi\}\cup\{|f\ast g_{k}|\ast \chi \}_{|k|\leq \lfloor (L-\delta)R^{-1}+1\rfloor}$.}
\item[iii)]{\emph{Constant-width network}: If $L>\delta$ and $R< \delta < 2R$ (i.e., only the spectral supports of the filters $g_k$, $k\in \{ -1,1\}$, overlap with  the interval $[-2R,2R]$), then the number of operationally significant nodes $\Xi_{\text{WH}}(n,R,\delta,L)=2 \big\lfloor \frac{L-\delta}{R} + 1\big\rfloor$, $n\geq 1$, is constant in $n$ (for $n=0$, we have $\Xi_{\text{WH}}(0,R,\delta,L)=1$). In this constant-width network, every network layer $n\geq 1$ contributes with  $2 \big\lfloor \frac{L-\delta}{R} + 1\big\rfloor$ elements to the feature vector.}
\item[iv)]{\emph{Expanding-width network}: If $L>\delta$ and $\delta\leq R$ (i.e., at least four filters $g_k$ overlap with the interval $[-2R,2R]$), then $\Xi_{\text{WH}}(n,R,\delta,L)=2 \big\lfloor \frac{L-\delta}{R} + 1\big\rfloor\big\lfloor 3-\frac{\delta}{R}\big\rfloor^{n-1}$, $n\geq 1$, which renders the network expanding width (for $n=0$, we have $\Xi_{\text{WH}}(0,R,\delta,L)=1$).}
\end{itemize}
We note that for $L>\delta$, it is the bandwidth $R$ of the WH prototype function $g$ that determines  the transition between the network topologies above, see Figure \ref{fig12344444}. 

\begin{figure}
\begin{center}

\begin{tikzpicture}
	\begin{scope}[scale=1.1]
	%Koordinatenachsen
	\draw[->] (0,0) -- (10,0) node[right] {$R$};
	\draw[->] (0,-.5) -- (0,.5) node[above] {};%$|\omega-2^j\eta|$, $|\omega-2^{j-1}\eta|$};
%	\draw[dashed] (0,1cm) -- (5.5,1cm);
%	\draw[dashed] (1cm,0) -- (1cm,3.1cm);
%	\draw[dotted] (0,1.666cm) -- (2,1.666cm);
%	\draw[dotted] (2cm,0) -- (2cm,1.666cm);
%	
%\draw (0 cm,2pt) -- (0 cm,-2pt) node[anchor=north east] {\footnotesize$0$};
		\draw (2.5 cm,2pt) -- (2.5 cm,-3pt) node[anchor=north] {\footnotesize$\frac{\delta}{2}$};
		\draw (5 cm,2pt) -- (5 cm,-3pt) node[anchor=north] {\footnotesize$\delta$};
		%\draw (10 cm,2pt) -- (10 cm,-2pt) node[anchor=north] {\footnotesize$2\delta$};

		%\draw (9 cm,2pt) -- (9 cm,-2pt) node[anchor=north] {\footnotesize$L$};

		\draw (5 cm,.25cm) node[anchor=north] {\footnotesize$[$};
		\draw (2.5 cm,.25cm) node[anchor=north] {\footnotesize$]$};
		\draw (1.25 cm,.5cm) node[anchor=north] {\footnotesize$\text{single-layer}$};
		\draw (3.75 cm,.5cm) node[anchor=north] {\footnotesize$\text{constant-width}$};
		\draw (7.25 cm,.5cm) node[anchor=north] {\footnotesize$\text{expanding-width}$};

		%\draw (2pt, 2 cm) -- (-2pt,2 cm) node[anchor=east] {\footnotesize$1$};
		%\draw (2pt, 1.666 cm) -- (-2pt,1.666 cm) node[anchor=east] {\small$\frac{5}{3}$};
			
			%\draw (1.6 cm,1.1cm) -- (1.5 cm,1.4cm) node[anchor=south] {\small$h_3$};
			%\draw (2.75 cm,1.3cm) -- (2.5 cm,1.8cm) node[anchor=south] {\small$\widehat{g_1}$};
			%\draw (4.75 cm,1.3cm) -- (4.5 cm,1.8cm) node[anchor=south] {\small$\widehat{g_2}$};
			%\draw (6.75 cm,1.3cm) -- (6.5 cm,1.8cm) node[anchor=south] {\small$\widehat{g_3}$};
			%\draw (8.75 cm,1.3cm) -- (8.5 cm,1.8cm) node[anchor=south] {\small$\widehat{g_4}$};
			%\draw (.5 cm,1.55cm) -- (1cm,1.95cm) node[anchor=south] {\small$\widehat{g}$};
			%\draw (4.75 cm,.55cm)--(4.65 cm,.85cm)  node[anchor=south] {\small$h_1$};

        		%\draw[dashed] (9,0) -- (9,2);
               
	\end{scope}
\end{tikzpicture}

\end{center}
\caption{\small{Transition between network topologies (as induced by operationally significant nodes) as a function of $R$.}}
\label{fig12344444}
\end{figure}
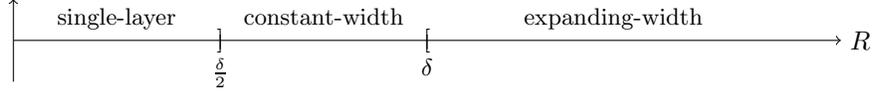
We next turn to wavelet filters with the following cases of interest:
\begin{itemize}
\item[i)]{\emph{Depth-pruned network}: If $L>\max\{1,r\}$, $r<\sqrt{2}$, and $N> M=1+\text{log}_{r^2-1}(L)$ (i.e., the effective bandwidths $L^{(n)}=L(r^2-1)^n$ of the feature maps are decreasing in $n$ and are eventually smaller than $1$ and hence contained in the spectral gap $[-1,1]$), then we have $\{ \Phi_\Omega^n(f)\}_{n=0}^{N}=\{ \Phi_\Omega^n(f)\}_{n=0}^{M}$. This means that from the $M$-th  layer onwards, there are no more non-zero signals to be propagated to deeper layers.}
\item[ii)]{\emph{Extremely-narrow network}: If $1<L\leq r= \sqrt{2}$  (i.e., the effective bandwidths $L^{(n)}=L$ of the feature maps are constant in $n$, with $n\geq1$, and overlap with the spectral supports of $g_k$, $k\in \{-1,1\}$, only), then the number of operationally significant nodes $\Xi_{\text{wav}}(n,r,L)=2$,  $n\geq 1$, is constant in $n$ (for $n=0$, we have $\Xi_{\text{wav}}(0,r,L)=1$). Every network layer $n\geq 1$ contributes with two elements to the feature vector. }
\end{itemize}

\section{MINIMIZING THE AVERAGE NUMBER OF \\ OPERATIONALLY SIGNIFICANT NODES PER LAYER}

The purpose of this section is to analyze the impact of   topology reduction on the average number of operationally significant nodes per layer. For simplicity of exposition, throughout this section, we focus on the WH case. We take the parameters $N$, $\delta$, and $L$ to be fixed and  assume  i) that the effective bandwidth $L$ of the input signal satisfies  $L>\delta$ (which guarantees that we are not in the (trivial) shallow feature extraction situation, see Section \ref{dfsdf}) and  ii) that the network depth satisfies $N\geq 3$.

We first recall that, thanks to \eqref{432897498237}, the (exponential) decay factor $a$ can be tuned through the parameter  $R$. Specifically, reducing the bandwidth $R$ of the WH prototype function $g$ implies faster (guaranteed) energy decay (see also Figure \ref{fig6}). Increasing $R$ implies slower (guaranteed) energy decay with $R$ eventually violating the condition $R\leq2\delta$ needed for validity of the statement in Theorem \ref{thm4}. In the following, we determine the optimal value $R^\ast$ in the exponential-decay regime $R\in (0,2\delta)$ of $W_n(f)$ that minimizes the average number of operationally significant nodes per layer  given by  
\begin{equation}\label{slkdnskj}
\Theta_{\text{WH}}(N,R,\delta,L) := \frac{1}{N}\sum_{n=1}^{N}\Xi_{\text{WH}}(n,R,\delta,L).\end{equation}
In order to minimize the expression in \eqref{slkdnskj} over the interval $(0,2\delta)$, we distinguish between three cases:
\begin{itemize}
\item[i)]{If $R\in [\delta,2\delta)$, then we are in the situation of an expanding-width network, and we have \begin{align}\hspace{-0.5cm}\Theta_{\text{WH}}(N,R,\delta,L)&=2 \bigg\lfloor \frac{L-\delta}{R} + 1\bigg\rfloor \frac{1}{N}\sum_{n=0}^{N-1}\bigg\lfloor \underbrace{3-\frac{\delta}{R}}_{\in [2,\frac{5}{2})}\bigg\rfloor^{n}=2 \bigg\lfloor \frac{L-\delta}{R} + 1\bigg\rfloor \frac{1}{N}\sum_{n=0}^{N-1}2^n\nonumber\\
&=2 \bigg\lfloor \frac{L-\delta}{R} + 1\bigg\rfloor \frac{1}{N}(2^N-1).\label{fdslknflnfd}
\end{align} }
\item[ii)]{For $R\in (\frac{\delta}{2},\delta)$, we have a constant-width network and 
\begin{equation}\label{slkfnlknfd}\Theta_{\text{WH}}(N,R,\delta,L)=2 \bigg\lfloor \frac{L-\delta}{R} + 1\bigg\rfloor \frac{1}{N}\sum_{n=0}^{N-1}\bigg\lfloor \underbrace{3-\frac{\delta}{R}}_{\in (1,2)}\bigg\rfloor^{n}=2 \bigg\lfloor \frac{L-\delta}{R} + 1\bigg\rfloor. 
\end{equation}
} 
\item[iii)]{If $R\in (0,\frac{\delta}{2}]$, we get a single-layer network, and $\Theta_{\text{WH}}(N,R,\delta,L)=2 \big\lfloor \frac{L-\delta}{R} + 1\big\rfloor.$
} 
\end{itemize}
Next, we note that the function $R\mapsto \big\lfloor \frac{L-\delta}{R} + 1\big\rfloor$, $R\in (0,2\delta)$, is monotonically decreasing in $R$, which allows us to conclude that, owing to  ii) and iii), $ R^\ast \notin (0,\frac{\delta}{2}]$. Moreover, thanks to the monotonicity of the mapping $R\mapsto \big\lfloor \frac{L-\delta}{R} + 1\big\rfloor$, $R\in (0,2\delta)$, it is sufficient to evaluate the expression \eqref{fdslknflnfd} for $R=2\delta$ and \eqref{slkfnlknfd} for  $R=\delta$ and to determine which of the resulting two values is smaller. Specifically, we have
\begin{align} \Theta_{\text{WH}}(N,2\delta,\delta,L)&=2 \bigg\lfloor \frac{L}{2\delta} + \frac{1}{2}\bigg\rfloor \frac{1}{N}(2^N-1)=2 \bigg\lfloor \frac{1}{2}\bigg( \frac{L}{\delta} + 1\bigg)\bigg\rfloor \frac{1}{N}(2^N-1)\geq  \bigg\lfloor  \frac{L}{\delta} \bigg\rfloor \frac{1}{N}(2^N-1)\label{sdlfndsfnlkdsn}\\
&> 2 \bigg\lfloor \frac{L}{\delta} \bigg\rfloor=\Theta_{\text{WH}}(N,\delta,\delta,L),\label{sdlfndsfnlkdsn1}
\end{align} 
where in \eqref{sdlfndsfnlkdsn} we used $2\lfloor \frac{x+1}{2}\rfloor \geq \lfloor x \rfloor$, $x\geq 0$, and \eqref{sdlfndsfnlkdsn1} is thanks to  $N\geq 3$, which, in turn, is  by assumption. This implies $R^\ast \in (\frac{\delta}{2},\delta)$ and renders the network constant-width.

\bibliography{report2}   %>>>> bibliography data in report.bib
\bibliographystyle{spiebib}   %>>>> makes bibtex use spiebib.bst

\end{document}